\relax

\documentclass{article}
\usepackage{times, microtype, fullpage}

\usepackage{hyperref}
\usepackage{url}
\usepackage{mathtools}
\usepackage{amsfonts, amssymb, amsthm}
\usepackage[algo2e,boxed]{algorithm2e}
\usepackage{enumitem}
\usepackage{thmtools}
\usepackage{graphicx}
\usepackage{xcolor}
\usepackage{natbib}
\usepackage{framed}
\usepackage{caption, subcaption}
\usepackage{newfloat}

\DeclareFloatingEnvironment[
    listname=Algorithm,
    name=Algorithm,
    placement=tbhp
]{myalg}

\title{Label Efficient Learning by Exploiting Multi-class Output Codes}
\author{
  Maria Florina Balcan\\
  School of Computer Science\\
  Carnegie Mellon University\\
  \texttt{ninamf@cs.cmu.edu}
  \and
  Travis Dick\\
  School of Computer Science\\
  Carnegie Mellon University\\
  \texttt{tdick@cs.cmu.edu}
  \and
  Yishay Mansour\\
  Microsoft Research and\\
  Tel Aviv University\\
  \texttt{mansour@tau.ac.il}
}
\date{}
\newtheorem{thm}{Theorem}
\newtheorem*{thm*}{Theorem}
\newtheorem{lem}{Lemma}

\newtheorem{defn}{Definition}
\newtheorem{assm}{Assumption}

\renewcommand \paragraph [1]{\noindent{\bf #1}}

\newcommand \reals {\mathbb{R}}

\newcommand \vol {\operatorname{Vol}}

\newcommand \dist {d}
\newcommand \ball {B}
\newcommand \halfball {B^{1/2}}
\newcommand \hbp {p^{1/2}}

\newcommand \unitsphere {\mathcal{S}^{d-1}}
\newcommand \sphere {\unitsphere}
\newcommand \sam {\mu_{\circ}}
\newcommand \ang {\theta}
\newcommand \scap {C}
\newcommand \scapvol {V^d}

\newcommand \cone {\operatorname{cone}}

\newcommand \prob {\operatorname*{Pr}}

\newcommand \argmin {\operatorname*{argmin}}

\newcommand \hamd {d_{\rm Ham}}

\newcommand \sign {\operatorname{sign}}
\newcommand \error {\operatorname{err}_{\datadist}}

\newcommand \datadist {P}

\newcommand \K {K} 
\newcommand \truelabel {f^*} 
\newcommand \X {\mathcal{X}} 
\newcommand \C {C} 
 
\newcommand \numclasses {L} 
\newcommand \numseparators {m} 
\newcommand \sampleset {S}
\newcommand \bmin {b_{\rm min}}
\newcommand \clb {c_{\rm lb}}
\newcommand \cub {c_{\rm ub}}
\newcommand \qlb [1]{q^{(#1)}_{\rm lb}}
\newcommand \qub [1]{q^{(#1)}_{\rm ub}}
\newcommand \ql {q_{\rm lb}}
\newcommand \qu {q_{\rm ub}}
\newcommand \rlb [1]{\rho^{(#1)}_{\rm lb}}
\newcommand \rub [1]{\rho^{(#1)}_{\rm ub}}

\newcommand \interior [2]{\operatorname{int}_{#2}(#1)}

\newcommand \setc [2]{\{#1 \,:\, #2\}}

\newcommand \norm [1]{\Vert #1 \Vert}

\newcommand \tp {\top}

\newcommand \ind[1]{\mathbb{I}\{#1\}}

\begin{document}

\maketitle

\begin{abstract}
  We present a new perspective on the popular multi-class algorithmic
  techniques of one-vs-all and error correcting output codes. Rather
  than studying the behavior of these techniques for supervised
  learning, we establish a connection between the success of these
  methods and the existence of label-efficient learning procedures. We
  show that in both the realizable and agnostic cases, if output codes
  are successful at learning from labeled data, they implicitly assume
  structure on how the classes are related. By making that structure
  explicit, we design learning algorithms to recover the classes with
  low label complexity. We provide results for the commonly studied
  cases of one-vs-all learning and when the codewords of the classes
  are well separated. We additionally consider the more challenging
  case where the codewords are not well separated, but satisfy a
  boundary features condition that captures the natural intuition that
  every bit of the codewords should be significant.
\end{abstract}

\section{Introduction}

\paragraph{Motivation:} Large scale multi-class learning problems with
an abundance of unlabeled data are ubiquitous in modern machine
learning. For example, an in-home assistive robot needs to learn to
recognize common household objects, familiar faces, facial
expressions, gestures, and so on in order to be useful. Such a robot
can acquire large amounts of unlabeled training data simply by
observing its surroundings, but it would be prohibitively time
consuming (and frustrating) to ask its owner to annotate any
significant portion of this raw data. More generally, in many modern
learning problems we often have easy and cheap access to large
quantities of unlabeled training data (e.g., on the internet) but
obtaining high-quality labeled examples is relatively expensive.  More
examples include text understanding, recommendation systems, or
wearable computing~\citep{Thrun96b, ThrunM95, ThrunM95b,
  NELL-aaai15}. The scarcity of labeled data is especially pronounced
in problems with many classes, since supervised learning algorithms
typically require labeled examples from every class. In such settings,
algorithms should strive to make the best use of unlabeled data in
order to minimize the need for expensive labeled examples.

\paragraph{Overview:} We approach label-efficient learning by making
the implicit assumptions of popular multi-class learning algorithms
explicit and showing that they can also be exploited when learning
from limited labeled data. We focus on a family of techniques called
\emph{output codes} that work by decomposing a given multi-class
problem into a collection of binary classification tasks
\citep{mehmohri,ECOC95,ajcolt05,ajalt09}. The novelty of our results
is to show that the existence of various low-error output codes
constrains the distribution of unlabeled data in ways that can be
exploited to reduce the label complexity of learning. We consider both
the consistent setting, where the output code achieves zero error, and
the agnostic setting, where the goal is to compete with the best
output code. The most well known output code technique is one-vs-all
learning, where we learn one binary classifier for distinguishing each
class from the union of the rest. When output codes are successful at
learning from labeled data, it often implies geometric structure in
the underlying problem. For example, if it is possible to learn an
accurate one-vs-all classifier with linear separators, it implies that
no three classes can be collinear, since then it would be impossible
for a single linear separator to distinguish the middle class from the
union of the others. In this work exploit this implicitly assumed
structure to design label-efficient algorithms for the commonly
assumed cases of one-vs-all and error correcting output codes, as well
as a novel boundary features condition that captures the intuition
that every bit of the codewords should be significant.

\paragraph{Our results:} Before discussing our results, we briefly
review the output code methodology. For a problem with $L$ classes, a
domain expert designs a code matrix $C \in \{\pm 1\}^{L \times m}$
where each column partitions the classes into two meaningful
groups. The number of columns $m$ is chosen by the domain expert. For
example, when recognizing household objects we could use the following
true/false questions to define the partitions: ``is it made of
wood?'', ``is it sharp?'', ``does it have legs?'', ``should I sit on
it?'', and so on. Each row of the code matrix describes one of the
classes in terms of these partitions (or semantic features). For
example, the class ``table'' could be described by the vector
$(+1,-1,+1,-1)$, which is called the class' codeword. Once the code
matrix has been designed, we train an output code by learning a binary
classifier for each of the binary partitions (e.g., predicting whether
an object is made of wood or not). To predict the class of a new
example, we predict its codeword in $\{\pm 1\}^m$ and output the class
with the nearest codeword under the Hamming distance. Two popular
special cases of output codes are one-vs-all learning, where $C$ is
the identity matrix (with -1 in the off-diagonal entries), and error
correcting output codes, where the Hamming distance between the
codewords is large.

In each of our results we assume that there exists a consistent or
low-error linear output code classifier and we impose constraints on
the code matrix and the distribution that generates the data. We
present algorithms and analysis techniques for a wide range of
different conditions on the code matrix and data distribution to
showcase the variety of implicit structures that can be exploited. For
the code matrix, we consider the case when the codewords are well
separated (i.e., the output code is error correcting), the case of
one-vs-all (where the code matrix is the identity), and a natural
boundary features condition. These conditions can loosely be compared
in terms of the Hamming distance between codewords. In the case of
error correcting output codes, the distance between codewords is large
(at least $d+1$ when the data is $d$-dimensional), in one-vs-all the
distance is always exactly $2$, and finally in the boundary features
condition the distance can be as small as $1$. In the latter cases,
the lower Hamming distance requirement is balanced by other structure
in the code matrix. For the distribution, we either assume that the
data density function satisfies a thick level set condition or that
the density is upper and lower bounded on its support. Both regularity
conditions are used to ensure that the geometric structure implied by
the consistent output code will be recoverable based on a sample of
data.

\paragraph{Error correcting output codes:} We first showcase how to
exploit the implicit structure assumed by the commonly used and
natural case of linear output codes where the Hamming distance between
codewords is large. In practice, output codes are designed to have
this property in order to be robust to prediction errors for the
binary classification tasks \citep{ECOC95}. We suppose that the output
code makes at most $\beta$ errors when predicting codewords and has
codewords with Hamming distance at least $2\beta + d + 1$ in a
$d$-dimensional problem. The key insight is that when the code words
are well separated, this implies that points belonging to different
classes must be geometrically separated as well. This suggests that
tight clusters of data will be label-homogeneous, so we should be able
to learn an accurate classifier using only a small number of label
queries per cluster. The main technical challenge is to show that our
clustering algorithm will not produce too many clusters (in order to
keep the label complexity controlled), and that with high probability,
a new sample from the distribution will have the same label as its
nearest cluster. We show that when the data density satisfies a
thick-level set condition (requiring that its level sets do not have
bridges or cusps that are too thin), then a single-linkage clustering
algorithm can be used to recover a small number of label-homogeneous
clusters.

\paragraph{One-vs-all:} Next, we consider the classic one-vs-all
setting for data in the unit ball. This is an interesting setting
because of the popularity of one-vs-all classification and because it
significantly relaxes the assumption that the codewords are well
separated (in a one-vs-all classifier, the Hamming distance between
codewords is exactly 2). The main challenge in this setting is that
there need not be a margin between classes and a simple single-linkage
style clustering might group multiple classes into the same
cluster. To overcome this challenge, we show that the classes are
probabilistically separated in the following sense: after projecting
onto the surface of the unit ball, the level sets of the projected
density are label-homogeneous. Equivalently, the high-density regions
belonging to different classes must be separated by low-density
regions. We exploit this structure by estimating the connected
components of the $\epsilon$ level set using a robust single-linkage
clustering algorithm.

\paragraph{The boundary features condition:} Finally, we introduce an
interesting and natural condition on the code matrix capturing the
intuition that every binary learning task should be significant. This
condition has the weakest separation requirement, allowing the
codewords to have a Hamming distance of only 1. This setting is our
most challenging, since it allows for the classes to be very well
connected to one another, which prevents clustering or level set
estimation from being used to find a small number of label-homogeneous
clusters. Nevertheless, we show that the implicit geometric structure
implied by the output code can be exploited to learn using a small
number of label queries. In this case, rather than clustering the
unlabeled sample, we apply a novel hyperplane-detection algorithm that
uses the \emph{absence} of data to learn local information about the
boundaries between classes. We then use the implicit structure of the
output code to extend these local boundaries into a globally accurate
prediction rule.

\paragraph{Agnostic Setting:} Finally, we show that our results for
the error correcting, one-vs-all, and boundary features cases can all
be extended to an agnostic learning setting, where we do not assume
that there exists a consistent output code classifier.

Our results show an interesting trend: when linear output codes are
able to learn from labeled data, it is possible to exploit the same
underlying structure in the problem to learn using a small number of
label requests. Our results hold under several natural assumptions on
the output code and general conditions on the data distribution, and
employ both clustering and hyperplane detection strategies to reduce
the label complexity of learning.
\section{Related Work}

Reduction to binary classification is one of the most widely used
techniques in applied machine learning for attacking multi-class
problems. Indeed, the one-vs-all, one-vs-one, and the error correcting
output code approaches~\citep{ECOC95} all follow this structure
\citep{mehmohri,ajcolt05,ajalt09,Daniely2012,Allwein2000}.

There is no prior work providing error bounds for output codes using
unlabeled data and interaction.  There has been a long line of work
for providing provable bounds for semi-supervised
learning~\citep{BBY05,BB10, Blum1998, Chapelle2014} and active
learning~\citep{BBL06, D11, BU15, Hanneke2014}. These works provide
bounds on the benefits of unlabeled data and interaction for
significantly different semi-supervised and active learning methods
that are based different assumptions, often focusing on binary
classification, thus the results are largely incomparable. Another
line of recent work considers the multi-class setting and uses
unlabeled data to consistently estimate the risk of classifiers when
the data is generated from a known family of models \citep{Donmez2010,
  Donmez2011a, Donmez2011b}. Their results do not immediately imply
learning algorithms and they consider generative assumptions, while in
contrast our work explicitly designs learning algorithms under
commonly used discriminative assumptions.

Another work related to ours is that of \citet{BBM13}, where labels
are recovered from unlabeled data. The main tool that they use, in
order to recover the labels, is the assumption that there are multiple
views and an underlying ontology that are known, and restrict the
possible labeling. Similarly, \citet{Steinhardt2016} show how to use
the method of moments to estimate the risk of a model from unlabeled
data under the assumption that the data has three independent
views. Our work is more widely applicable, since it applies when we
have only a single view.

The output-code formalism is also used by~\citet{zero-shotpmhm09} for
the purpose of zero shot learning. They demonstrate that it is
possible to exploit the semantic relationships encoded in the code
matrix to learn a classifier from labeled data that can predict
accurately even classes that \emph{did not appear in the training
  set}. These techniques make very similar assumptions to our work but
require that the code matrix $C$ is known and the problem that they
solve is different.

\section{Preliminaries}
\label{sec:preliminaries}

We consider multiclass learning problems over an instance space
$\X \subset \reals^d$ where each point is labeled by
$\truelabel : \X \to \{1, \dots, \numclasses\}$ to one out of
$\numclasses$ classes and the probability of observing each outcome
$x \in \X$ is determined by a data distribution $\datadist$ on
$\X$. The density function of $P$ is denoted by
$p : \X \to [0,\infty)$. In all of our results we assume that there
exists a consistent (but unknown) linear output-code classifier
defined by a code matrix $C \in \{\pm 1\}^{L \times m}$ and $m$ linear
separators $h_1$, \dots, $h_m$. We denote class $i$'s code word by
$C_i$ and define $h(x)~=~(\sign(h_1(x)), \dots, \sign(h_m(x)))$ to be
the predicted code word for point $x$. We let $\hamd(c,c')$ denote the
Hamming distance between any codewords $c,c' \in \{\pm
1\}^m$. Finally, to simplify notation, we assume that the diameter of
$\X$ is at most 1.

Our goal is to learn a hypothesis
$\hat f : \X \to \{1, \dots, \numclasses\}$ minimizing
$\error(\hat f) = \prob_{X \sim P}(\hat f(x) \neq f(x))$ from an
unlabeled sample drawn from the data distribution $\datadist$ together
with a small set of actively queried labeled examples.

Finally, we use the following notation throughout the paper: For any
set $A$ in a metric space $(\X, d)$, the $\sigma$-interior of $A$ is
the set
$\interior{A}{\sigma} = \setc{x \in A}{B(x,\sigma) \subset A}$. The
notation $\tilde O(\cdot)$ suppresses logarithmic terms.

\section{Error Correcting Output Codes}
\label{sec:hamdp1}

We first consider the implicit structure when there exists a
consistent linear \emph{error correcting} output code classifier:
\begin{assm}
  \label{assm:ecoc}
  There exists a code matrix $C \in \{\pm 1\}^{L \times m}$ and linear
  functions $h_1$, \dots, $h_m$ such that: (1) there exists
  $\beta \geq 0$ such that any point $x$ from class $y$ satisfies
  $\hamd(h(x), C_y) \leq \beta$, (2) The Hamming distance between the
  codewords of $C$ is at least $2\beta + d + 1$; and (3) at most $d$
  of the separators $h_1$, \dots, $h_m$ intersect at any point.
\end{assm}

Part (1) of this condition is a bound on the number of linear
separators that can make a mistake when the output code predicts the
codeword of a new example, part (2) formalizes the requirement of
having well separated codewords, and part (3) requires that the
hyperplanes be in general position, which is a very mild condition
that can be satisfied by adding an arbitrarily small perturbation to
the linear separators.

Despite being very natural, Assumption~\ref{assm:ecoc} conveniently
implies that there exists a distance $g > 0$ such that any points that
$f^*$ assigns to different classes must be at least distance $g$
apart. To see this, fix any pair of points $x$ and $x'$ with
$f^*(x) \neq f^*(x')$. By the triangle inequality, we have that
$\hamd(h(x), h(x')) \geq d+1$, implying that the line segment $[x,x']$
crosses at least $d+1$ of the linear separators. Since only $d$ linear
separators can intersect at a point, the line segment must have
non-zero length. Applying this argument to the closest pair of points
between all pairs of classes and taking the minimum length gives the
result. A formal proof is given Section~\ref{apdx:hamdp1} of the
Appendix.

\begin{restatable}{lem}{SeparationLemma}
  \label{lem:separation}
  Under Assumption~\ref{assm:ecoc}, there exists $g > 0$ s.t. if
  points $x$ and $x'$ belong to different classes, then
  $\norm{x - x'} > g$.
\end{restatable}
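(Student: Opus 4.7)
The plan is to first turn the separation of codewords into separation in Hamming distance between the predicted codewords $h(x), h(x')$, and then convert this combinatorial separation into a geometric separation via the general position assumption. First I would fix $x, x'$ of different classes $i = f^*(x) \neq f^*(x') = j$, and combine the three parts of Assumption~\ref{assm:ecoc} with the triangle inequality for $\hamd$:
\[
  2\beta + d + 1 \le \hamd(C_i, C_j) \le \hamd(C_i, h(x)) + \hamd(h(x), h(x')) + \hamd(h(x'), C_j) \le 2\beta + \hamd(h(x), h(x')),
\]
which yields $\hamd(h(x), h(x')) \ge d+1$. Every coordinate $k$ where the two predicted codewords disagree means $\sign(h_k(x)) \ne \sign(h_k(x'))$, so the segment $[x,x']$ crosses the hyperplane $\{h_k = 0\}$. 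Hence at least $d+1$ of the $m$ hyperplanes intersect the segment $[x,x']$.

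Next I would fix a pair of classes $(i,j)$ and argue that $g_{ij} := \inf\{\norm{x - x'} : f^*(x) = i, f^*(x') = j\} > 0$ by contradiction. Suppose there exist sequences $x_n$ of class $i$ and $x_n'$ of class $j$ with $\norm{x_n - x_n'} \to 0$. Since $\X$ has diameter at most $1$ (so its closure is compact in $\reals^d$), pass to a subsequence with $x_n \to p$ and $x_n' \to p$ for a common limit $p \in \reals^d$. By the previous paragraph, for each $n$ some subset $S_n$ of at least $d+1$ hyperplanes crosses the segment $[x_n, x_n']$. Because there are only $m$ hyperplanes (and hence finitely many such subsets), pigeonhole lets us pass to a further subsequence on which $S_n$ equals a fixed set $S$ with $|S|\ge d+1$. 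For each $h_k \in S$, the crossing point $y_n^{(k)}$ lies on the closed hyperplane $\{h_k = 0\}$ and on the segment $[x_n, x_n']$, so $y_n^{(k)} \to p$ and hence $p$ lies on every hyperplane in $S$. This contradicts part (3) of Assumption~\ref{assm:ecoc}, which forbids more than $d$ of the separators from meeting at a point. Therefore $g_{ij} > 0$.

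Finally, since there are only $\binom{L}{2}$ pairs of classes, the lemma follows by setting $g := \min_{i \ne j} g_{ij} > 0$. The main obstacle is the compactness-plus-pigeonhole step in the contradiction argument: the combinatorial bound $\hamd(h(x),h(x'))\ge d+1$ only gives $x \ne x'$ for each individual pair, and it is the finiteness of the hyperplane arrangement together with general position that upgrades this to a uniform lower bound. Everything else is a routine application of the triangle inequality and a minimum over finitely many class pairs.
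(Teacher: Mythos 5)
Your proof is correct and follows essentially the same strategy as the paper's: derive $\hamd(h(x), h(x')) \geq d+1$ from the triangle inequality, note that each disagreeing coordinate forces a hyperplane crossing of the segment $[x,x']$, then use Bolzano--Weierstrass together with a finiteness/pigeonhole step to produce a single point on at least $d+1$ hyperplanes, contradicting general position, and finally take a minimum over the finitely many class pairs. The only cosmetic difference is where the pigeonhole happens: the paper pigeonholes on the (finitely many) predicted codewords within distance $\beta$ of $C_i$ and $C_j$, whereas you pigeonhole directly on the (finitely many) subsets of $d+1$ hyperplanes crossed by $[x_n, x_n']$ — both are valid and equivalent in spirit.
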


Lemma~\ref{lem:separation} suggests that we should be able to reduce
the label complexity of learning by clustering the data and querying
the label of each cluster, since nearby points must belong to the same
class. If we use a single-linkage style clustering algorithm that
merges clusters whenever their distance is smaller than $g$, we are
guaranteed that the clusters will be label-homogeneous, and therefore
we can recover nearly all of the labels by querying one label from the
largest clusters. See Algorithm~\ref{alg:singlelinkage} for
pseudocode.

\begin{myalg}[htb]
  \begin{framed}
  \hangindent=0.7cm {\bf Input:} Sample $\sampleset = \{x_1, \dots, x_n\}$, 
  radius $r_c > 0$, target error $\epsilon > 0$
  \begin{enumerate}[noitemsep,nolistsep,leftmargin=*]
  \item Let $\{\hat A_1\}_{i=1}^N$ be the connected components of the
    graph $G$ with vertex set $\sampleset$ and an edge between $x_i$
    and $x_j$ if $\norm{x_i - x_j} \leq r_c$.
  \item In decreasing order of size, query the label of each
    $\hat A_i$ until $\leq \frac{\epsilon}{4}n$ points belong to
    unlabeled clusters.
  \item Output $\hat f(x) =$ label of nearest labeled cluster to $x$.
  \end{enumerate}
\end{framed}

\caption{\vspace{0.25cm}Single-linkage learning.\vspace{-0.25cm}}
\label{alg:singlelinkage}
\end{myalg}

In order to get a meaningful reduction in label complexity, we need to
ensure that when we cluster a sample of data, most of the samples will
belong to a small number of clusters. For this purpose, we borrow the
following very general and interesting thick level set condition from
\citet{Steinwart2015}: a density function $p$ has $C$-thick level sets
if there exists a level $\lambda_0 > 0$ and a radius $\sigma_0 > 0$
such that for every level $\lambda \leq \lambda_0$ and radius
$\sigma < \sigma_0$, (1) the $\sigma$-interior of $\{p \geq \lambda\}$
is non-empty and (2) every point in $\{p \geq \lambda\}$ is at most
distance $C\sigma$ from the $\sigma$-interior. This condition
elegantly characterizes a large family of distributions for which
single-linkage style clustering algorithms succeed at recovering the
high-density clusters and only rules out distributions whose level
sets have bridges or cusps that are too thin. The thickness parameter
$C$ measures how pointed the boundary of the level sets of $p$ can
be. For example, in $\reals^d$ if the level set of $p$ is a ball then
$C=1$, while if the level set is a cube, then $C = \sqrt{d}$.

Using the thick level set condition to guarantee that our clustering
algorithm will not subdivide the high-density clusters of $p$, we
obtain the following result for Algorithm~\ref{alg:singlelinkage}

\begin{restatable}{thm}{HamDPOTheorem}
  \label{thm:hamDPO}
  Suppose that Assumption~\ref{assm:ecoc} holds and that the data
  distribution has $C$-thick level sets. For any target error
  $\epsilon > 0$, let $N$ be the number of connected components of
  $\{p \geq \epsilon/(2\vol(K))\}$. With probability at least
  $1-\delta$, running Algorithm~\ref{alg:singlelinkage} with parameter
  $r_c < g$ on an unlabeled sample of size
  $n = \tilde O(\frac{1}{\epsilon^2}((4C)^{2d}d^{d+1} / r_c^{2d} +
  N))$ will query at most $N$ labels and output a classifier with
  error at most $\epsilon$.
\end{restatable}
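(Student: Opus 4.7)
The plan is to use Lemma~\ref{lem:separation} to make single-linkage clustering produce label-homogeneous clusters and the $C$-thick level set condition to control both the number of clusters of nontrivial size and the probability that a test point lies near a labeled sample of its own class.

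I would first set a density threshold $\lambda = \Theta(\epsilon/\vol(K))$ so that $P(\{p < \lambda\}) \leq \epsilon/8$, and let $A_1,\ldots,A_{N'}$ be the connected components of $L_\lambda = \{p \geq \lambda\}$; lowering $\lambda$ only merges components, so $N' \leq N$. By the $C$-thickness condition, for a resolution $\sigma$ chosen so that $(C+1)\sigma \leq r_c$ and $\sigma < g$, each $A_i$ has non-empty $\sigma$-interior and every point of $A_i$ lies within $C\sigma$ of that interior. A standard covering-plus-multiplicative-Chernoff calculation---covering the $\sigma$-interior by $O((d/\sigma)^d)$ balls of radius $\sigma/2$, lower-bounding each ball's $P$-mass by $\lambda \cdot \vol(B(0,\sigma/2))$, and union-bounding---shows that the stated sample size $n = \tilde O((4C)^{2d} d^{d+1}/(\epsilon^2 r_c^{2d}))$ guarantees, with probability $\geq 1-\delta/2$, that the sample hits every ball of the cover.

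Conditional on this covering event, two structural claims hold: any two samples inside the same $A_i$ are joined by a chain of samples at consecutive distance $\leq r_c$ (walk along a connecting path in the $\sigma$-interior, hopping between successive cover balls), so each $A_i$ becomes a single connected component of the graph $G$ and there are at most $N$ high-density clusters; and Lemma~\ref{lem:separation} combined with $r_c < g$ rules out any edge of $G$ between different classes, so every cluster is label-homogeneous. By a Chernoff bound, the empirical mass of samples in $\{p < \lambda\}$---which contains all low-density clusters---is at most $\epsilon n/4$ with high probability, so labeling just the $\leq N$ high-density clusters drives the unlabeled sample mass below the stopping threshold, and the algorithm queries $\leq N$ labels.

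For the error bound I would decompose $\error(\hat f)$ as (a) $P(\{p < \lambda\}) \leq \epsilon/8$, (b) the $P$-mass of components $A_i$ whose associated cluster was not labeled, and (c) the $P$-mass of test points $x \in L_\lambda$ in a labeled component whose nearest labeled cluster has the wrong label. Part (c) is zero: if $x \in A_i$ the covering event provides a sample $x_j \in A_i$ with $\|x - x_j\| \leq (C+1)\sigma \leq r_c < g$, and Lemma~\ref{lem:separation} forces every sample closer to $x$ than $x_j$ to also share $f^*(x)$, so the nearest labeled cluster is $A_i$'s own. For (b), the empirical mass of unlabeled high-density samples is bounded above by the empirical total unlabeled mass $\leq \epsilon n/4$; a uniform-convergence argument over the choice of labeled/unlabeled partition then upgrades this to true mass $\leq \epsilon/2$. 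Summing (a), (b), (c) yields $\error(\hat f) \leq \epsilon$. The main obstacle is making the covering step quantitative: matching the constants $(4C)^{2d}$ and $d^{d+1}$ requires choosing $\sigma$ at scale $\Theta(r_c/(C\sqrt d))$ so that the $C\sigma$-expansion of an appropriate $\sigma$-net of the $\sigma$-interior sweeps out all of $L_\lambda$, controlling the Euclidean covering number, and carefully propagating the multiplicative Chernoff and union-bound terms to an overall failure probability of $\delta$; once this bookkeeping is done, the rest of the argument is essentially routine.
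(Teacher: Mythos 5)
Your proof follows essentially the same route as the paper's: use the thick-level-set condition to argue that the sample forms a fine covering of the level set $\{p \geq \lambda\}$, so that single linkage at radius $r_c$ merges each connected component into one cluster; invoke Lemma~\ref{lem:separation} with $r_c < g$ to get label-homogeneous clusters; and split the error into low-density mass plus unlabeled-component mass. The one genuine methodological difference is the concentration step: the paper invokes a uniform VC bound over all Euclidean balls, while you propose a finite $\sigma/2$-net of the $\sigma$-interior plus a multiplicative-Chernoff union bound over the cover. Both routes deliver the required covering property with comparable (indeed your route gives slightly better) sample complexity, so this is an acceptable alternative, though it is a minor variation rather than a structurally different argument.

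There is, however, a real gap in your argument as written. You set $\lambda = \Theta(\epsilon/\vol(K))$ lower than $\epsilon/(2\vol(K))$ so that $P(\{p<\lambda\}) \leq \epsilon/8$, and assert ``lowering $\lambda$ only merges components, so $N' \leq N$.'' This is false. Lowering the threshold enlarges the super-level set $\{p \geq \lambda\}$: existing components can merge, but \emph{new} components can also appear, namely regions where $\lambda \leq p < \epsilon/(2\vol(K))$ that are disconnected from the higher-density set. So $N'$ can exceed $N$, which would break the claimed bound of at most $N$ label queries. The thick-level-set hypothesis gives no monotonicity of the component count in $\lambda$. The straightforward fix is to work with exactly $\lambda = \epsilon/(2\vol(K))$, as the paper does, giving $P(\{p<\lambda\}) \leq \epsilon/2$; the error budget then closes as $\epsilon/2$ (low-density tail) $+\ \epsilon/2$ (unlabeled components, via the Hoeffding/union bound over the $2^N$ unions) $+\ 0$ (correctly labeled components). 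With that correction your proof is sound.
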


\begin{proof}
  For convenience, define $\sigma = r_c / (4C)$ and
  $\lambda = \epsilon / (2\vol(K))$. Using a standard VC-bound
  \citep{VC} together with the fact that balls have VC-dimension
  $d+1$, for $n = O((4C)^{2d} d^{d+1} / (\epsilon^2 r_c^{2d}))$
  guarantees that with probability at least $1-\delta/2$ the following
  holds simultaneously for every center $x \in \reals^d$ and radius
  $r \geq 0$:
  \begin{equation}
    \label{eq:ballconvergence}
    \biggl| |B(x,r) \cap \sampleset|/n - P(B(x,r)) \biggr|
    \leq \frac{1}{2} \lambda \sigma^d v_d,
  \end{equation}
  where $v_d$ denotes the volume of the unit ball in
  $\reals^d$. Assume that this high probability event occurs.

  We first show that the sample $\sampleset$ forms a
  $2C\sigma$-covering of the set $\{p \geq \lambda\}$; that is, for
  every $x \in \{p \geq \lambda\}$ we have
  $d(x,\sampleset) \leq 2C\sigma$. Let $x$ be any point in
  $\{p \geq \lambda\}$. Since $p$ has $C$-thick level sets, we know
  that there exists a point
  $y \in \interior{\{p \geq \lambda\}}{\sigma}$ such that
  $\norm{x - y} \leq C\sigma$. Moreover, the ball $B(y,\sigma)$ is
  contained in $\{p \geq \lambda\}$, which implies that it has
  probability mass at least $\lambda \sigma^d v_d$ and by
  \eqref{eq:ballconvergence} we have that
  $|B(y,r) \cap \sampleset|/n \geq \frac{1}{2} \lambda \sigma^d v_d >
  0$, so there must exist a point $z \in \sampleset \cap
  B(y,\sigma)$. Now we have that
  $d(x, \sampleset) \leq \norm{x - z} \leq \norm{x - y} + \norm{y - z}
  \leq C\sigma + \sigma \leq 2C\sigma$, where the final inequality
  follows from the fact that $C \geq 1$.

  Now let $A_1$, \dots, $A_N$ be the $N$ connected components of
  $\{p \geq \lambda\}$. We will argue that for each $i \in [N]$, there
  exists a unique cluster output by step 1 of the algorithm, say
  $\hat A_i$, such that $\hat A_i$ contains $A_i \cap \sampleset$ and
  for any point $x \in A_i$, the closest output cluster is $\hat A_i$.

  To see that $\hat A_i$ contains $A_i \cap \sampleset$, consider any
  pair of points $x$ and $x'$ in $A_i \cap \sampleset$. Since $A_i$ is
  connected, we know there is a path $\pi : [0,1] \to A_i$ such that
  $\pi(0) = x$ and $\pi(1) = x'$. Since the sample set $\X$ is a
  $2C\sigma$ covering of $\{p \geq \lambda\}$, it is also a
  $2C\sigma$-covering of $A_i$, which implies that we can find a
  sequence of points $y_1$, \dots, $y_M \in \X$ (possibly with
  repetition) such that the path $\pi$ passes through the balls
  $B(y_1, 2C\sigma)$, \dots, $B(y_M, 2C\sigma)$ in order. Since
  consecutive balls must touch at the point that the path $\pi$
  crosses from one ball to the next, we know that
  $\norm{y_i - y_{i+1}} \leq 4C\sigma = r_c$, and therefore the path
  $x \to y_1 \to \dots \to y_M \to x'$ is a path in the graph $G$
  connecting $x$ and $x'$.

  Now consider any point $x \in A_i$. We argued above that there
  exists a sample point $z \in \hat A_i$ that was within distance
  $2C\sigma$ from $x$. Now let $z^*$ be the closest sample in $\X$ to
  $x$. Then we know that
  $\norm{x - z^*} \leq \norm{x - z} \leq 2C\sigma$. By the triangle
  inequality, we have that
  $d\norm{z - z^*} \leq \norm{z - x} + \norm{x - z^*} \leq 4C\sigma
  \leq r_c$, and therefore $z$ and $z^*$ are connected in the graph
  $G$. Since $z$ belongs to $\hat A_i$, it follows that $z^*$ does
  too, and therefore the closest cluster to $x$ is $\hat A_i$.

  It remains to bound the error of the resulting classification
  rule. Since there is a margin of width $g > 0$ separating the
  classes, we know that every connected component of
  $\{p \geq \lambda\}$ must contain points belonging to exactly one
  class. Moreover, since we ran the algorithm with connection radius
  $r_c < g$, we know that the clusters output by step 1 will contain
  points belonging to exactly one class. It follows that if we query
  the label of any point in the cluster $\hat A_i$ then the algorithm
  will not error on any test point in $A_i$. Say that one of the
  connected components $A_i$ is labeled if we query the label of the
  corresponding cluster $\hat A_i$.

  Applying Hoeffding's inequality and the union bound to all possible
  $2^N$ unions of the sets $A_1$, \dots, $A_N$, our value of $n$
  guarantees that with probability at least $1-\delta/2$, the
  following holds simultaneously for all subsets of indices
  $I \subset [N]$:
  \[
    \biggl| \bigl| \sampleset \cap \bigl( \bigcup_{i \in I} A_i \bigr) \bigr|/n - P(\bigcup_{i \in I} A_i) \biggr| \leq \frac{\epsilon}{4}.
  \]
  Since the algorithm queries labels until at most
  $\frac{\epsilon}{4}n$ points belong to unlabeled clusters, we know
  that the number of samples belonging to the unlabeled $A_i$ sets is
  at most $\frac{\epsilon}{4}n$. By the above uniform convergence, it
  follows that their total probability mass is at most
  $\epsilon/2$. Finally, since the algorithm only errors on test
  points in $\{p \leq \lambda\}$, which has probability mass at most
  $\epsilon/2$ or on unlabeled $A_i$ sets, the error of the resulting
  classifier is at most $\epsilon$.
\end{proof}

The exponential dependence on the dimension in
Theorem~\ref{thm:hamDPO} is needed to ensure the sample $\sampleset$
will be a fine covering of the level set of $p$ w.h.p, which
guarantees that Algorithm~\ref{alg:singlelinkage} will not subdivide
its connected components into smaller clusters. When the data has low
intrinsic dimensionality, the unlabeled sample complexity is only
exponential in the intrinsic dimension. The following result shows
that under the common assumption that the distribution is a doubling
measure, then the unlabeled sample complexity is exponential only in
the doubling dimension. Recall that a probability measure $P$ is said
to have doubling dimension $D$ if for every point $x$ in the support
of $P$ and every radius $r > 0$, we have that
$P(B(x,2r)) \leq 2^D P(B(x,r))$ (see, for example,
\citep{Dasgupta2013}).

\begin{restatable}{thm}{HamDPODoublingTheorem}
  \label{thm:hamDPODoubling}
  Suppose that Assumption~\ref{assm:ecoc} holds the data distribution
  $P$ has doubling dimension $D$, and the support of $P$ has $N$
  connected components. With probability at least $1-\delta$, running
  Algorithm~\ref{alg:singlelinkage} with parameter $r_c < g$ on a
  sample of size $n = \tilde O\bigl( d/r_c^{2D} + N/\epsilon^2 \bigr)$
  will query at most $N$ labels and have error at most $\epsilon$.
\end{restatable}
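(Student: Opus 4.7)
The plan is to adapt the proof of Theorem~\ref{thm:hamDPO}, replacing the thick-level-set hypothesis with a doubling-dimension covering argument while leaving the cluster-identification and error-bookkeeping steps essentially intact. The role played by the level set $\{p \geq \lambda\}$ in Theorem~\ref{thm:hamDPO} is taken over by the support of $P$, whose $N$ connected components I aim to recover exactly as the clusters produced by step~1 of Algorithm~\ref{alg:singlelinkage}.

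The first (and main) step is to show that with probability at least $1-\delta/2$ the sample $\sampleset$ forms an $r_c/4$-covering of the support. I would use the doubling-dimension hypothesis to obtain a uniform mass lower bound on small balls centered in the support. Since $\X$ has diameter at most $1$, for any point $x$ in the support we have $P(B(x,1)) = 1$, and iterating the inequality $P(B(x,2r)) \leq 2^D P(B(x,r))$ down from radius $1$ yields $P(B(x, r_c/8)) \geq c(D) \cdot r_c^D$ for a constant $c(D)$ depending only on $D$. Next I would invoke the VC bound for Euclidean balls (VC-dimension $d+1$), as in the proof of Theorem~\ref{thm:hamDPO}, but with tolerance $\tau = c(D) r_c^D / 2$. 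For $n = \tilde O(d/r_c^{2D})$ this gives a uniform approximation $\bigl| |B(x,r) \cap \sampleset|/n - P(B(x,r)) \bigr| \leq \tau$ over all balls simultaneously, which combined with the mass lower bound forces every ball of radius $r_c/8$ centered in the support to contain at least one sample point.

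The second step is essentially a verbatim rerun of the cluster-identification argument from Theorem~\ref{thm:hamDPO}, with the connected components $A_1, \dots, A_N$ of the support replacing the components of the level set. The $r_c/4$-covering property of $\sampleset$ lets us chain samples along any continuous path in $A_i$ through consecutive covering balls whose centers lie within $r_c$ of one another, producing a single output cluster $\hat A_i$ that contains $A_i \cap \sampleset$ and is the nearest cluster to every point of $A_i$. Since $r_c < g$, Lemma~\ref{lem:separation} forbids any edge of the $r_c$-graph from connecting samples of different classes, so each $\hat A_i$ is label-homogeneous and step~1 outputs exactly $N$ clusters; in particular at most $N$ labels are ever queried.

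The final step is the error accounting, which again mirrors Theorem~\ref{thm:hamDPO}: a Hoeffding and union bound over the $2^N$ subsets of $\{A_1, \dots, A_N\}$, costing $n = \tilde O(N/\epsilon^2)$ samples, ensures that the population mass of any union of $A_i$'s is within $\epsilon/4$ of its sample mass. Combined with the step~2 stopping rule, this bounds the population mass of the unlabeled clusters by $\epsilon$, which upper bounds the classifier's error since every labeled cluster is classified correctly. Adding the two sample costs gives the claimed $n = \tilde O(d/r_c^{2D} + N/\epsilon^2)$. The technical heart of the argument is the ball-mass lower bound: once it is in place, the structure of the proof of Theorem~\ref{thm:hamDPO} transfers directly, and only the exponent in the covering phase changes from $r_c^{-2d}$ to $r_c^{-2D}$.
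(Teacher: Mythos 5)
Your proof is correct and follows essentially the same route as the paper: the doubling condition gives a ball-mass lower bound of order $r_c^D$, a VC bound for balls with $n = \tilde{O}(d/r_c^{2D})$ turns this into a covering of the support, and the chaining, label-homogeneity ($r_c < g$ via Lemma~\ref{lem:separation}), and Hoeffding/union-bound error accounting transfer verbatim from Theorem~\ref{thm:hamDPO} with the connected components of the support in place of the level-set components. If anything, your write-up is a bit cleaner than the paper's, which has a sign typo (writing $r^{-D}$ where $r^D$ is meant) and uses a radius-dependent rather than fixed tolerance in the VC step.
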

\begin{proof}
  Let $x$ be any point in the support of $P$. Since we assumed that
  the diameter of $\X$ is 1, we know that $\X \subset B(x,1)$ and
  therefore $P(B(x,1)) = 1$. Applying the doubling condition $\lg(r)$
  times, it follows that for any radius $r > 0$ we have that
  $P(B(x,r)) \geq r^{-D}$.

  As in the proof of Theorem~\ref{thm:hamDPO}, for our choice of $n$
  the following holds with probability at least $1-\delta/2$ uniformly
  for every center $x$ in $\X$ and radius $r \geq 0$:
  \[
    \bigl| |B(x,r) \cap \sampleset|/n - P(B(x,r)) \bigr| \leq \frac{1}{2} r^{-D}.
  \]
  Assume this high probability event occurs. Since every ball of
  radius $r$ centered at a point in the support of $P$ has mass at
  least $r^{-D}$, each such ball must contain at least one sample
  point and it follows that the sample $\sampleset$ forms an
  $r^{-D}$-covering of the support of $P$.

  The rest of the proof now follows identically the proof of
  Theorem~\ref{thm:hamDPO} with the $A_1, \dots, A_N$ sets being the
  connected components of the support of $P$, since each connected
  component must be label-homogeneous.
\end{proof}

The unlabeled sample complexity in Theorem~\ref{thm:hamDPO} depends on
the gap $g$ between classes because we must have $r_c < g$. Such a
scale parameter must appear in our results, since
Assumption~\ref{assm:ecoc} is scale-invariant, yet our algorithm
exploits scale-dependent geometric properties of the problem. If we
have a conservatively small estimate $\hat g \leq g$, then the
conclusion of Theorem~\ref{thm:hamDPO} and
Theorem~\ref{thm:hamDPODoubling} continue to hold if the connection
radius and unlabeled sample complexity are set using the estimate
$\hat g$. Nevertheless, in some cases we may not have an estimate of
$g$, making it difficult to to apply
Algorithm~\ref{alg:singlelinkage}. The following result shows that if
we have an estimate of the number of high-density clusters, and these
clusters have roughly balanced probability mass, then we are still
able to take advantage of the geometric structure even when the
distance $g$ is unknown. The idea is to construct a hierarchical
clustering of $\sampleset$ using single linkage, and then to use a
small number of label queries to find a good pruning.

\begin{myalg}
\begin{framed}
  \hangindent=0.7cm {\bf Input:} Sample
  $\sampleset = \{x_1, \dots, x_n\}$, $t \in \mathbb{N}$.
  \begin{enumerate}[noitemsep,nolistsep,leftmargin=*]
  \item Let $T$ be the hierarchical clustering of $S$ obtained by
    single-linkage.
  \item Query the labels of a random subset of $\sampleset$ of size
    $t$.
  \item Let $\{\hat B_i\}_{i=1}^M$ be the coarsest pruning of $T$ such
    that each $\hat B_i$ contains labels from one class.
  \item Output $\hat f(x) = $ label of nearest $\hat B_i$ to $x$.
  \end{enumerate}
\end{framed}
\vspace{-0.25cm}
\caption{Hierarchical single-linkage learning.}
\vspace{-0.3cm}
\label{alg:activehierarchicallinkage}
\end{myalg}

\begin{restatable}{thm}{ActiveHierarchicalClusteringTheorem}
  \label{thm:active_hierarchical_clustering}
  Suppose Assumption~\ref{assm:ecoc} holds and the density $p$ has
  $C$-thick level sets. For any $0 < \epsilon \leq 1/2$, suppose that
  $\{A_i\}_{i=1}^N$ are the connected components of
  $\{p \geq \epsilon/(2\vol(K))\}$ and for some $\alpha \geq 1$ we
  have $P(A_i) \leq \alpha P(A_j)$ for all $i,j$. With probability
  $\geq 1-\delta$, running
  Algorithm~\ref{alg:activehierarchicallinkage} with
  $t = \tilde O(\alpha N)$ on an unlabeled sample of size
  $n = \tilde O (\frac{1}{\epsilon^2} (C^{2d}d^{d+1}/g^{2d} + N))$
  will have error $\leq \epsilon$.
\end{restatable}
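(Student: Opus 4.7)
The plan is to combine the geometric machinery from the proof of Theorem~\ref{thm:hamDPO} with a tree-theoretic analysis of the coarsest label-homogeneous pruning. First I fix $\sigma = g/(8C)$ and $\lambda = \epsilon/(2\vol(\K))$ and invoke the VC-bound argument from the proof of Theorem~\ref{thm:hamDPO} for balls of radius $\sigma$. With probability at least $1-\delta/3$, this makes $\sampleset$ a $2C\sigma$-covering of $\{p \geq \lambda\}$; by the same uniform convergence applied to the sets $A_i$, it also makes $|A_i \cap \sampleset|/n \geq \tfrac{1}{2} P(A_i)$ for every $i$. The covering ensures any two samples in the same $A_i$ are linked by a chain of sample hops of length $\leq 4C\sigma = g/2$ (exactly the path-chaining argument in Theorem~\ref{thm:hamDPO}), while Lemma~\ref{lem:separation} forces any two samples in distinct $A_i, A_j$ to lie at distance strictly greater than $g$.

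Because single-linkage merges clusters in increasing order of pairwise distance, the first consequence is that each $\hat A_i := A_i \cap \sampleset$ appears as a proper subtree of $T$: all its intra-cluster merges occur at heights $\leq g/2$, and none of them is merged with any $\hat A_j$ or with a sample in $\{p<\lambda\} \cap \sampleset$ lying at distance $>g$ until strictly later in the process. Moreover, by Lemma~\ref{lem:separation} every queried label inside $\hat A_i$ equals $\truelabel(A_i)$, so $\hat A_i$ is itself a label-homogeneous subtree.

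To exploit this, I use the balance assumption $P(A_i) \leq \alpha P(A_j)$ together with $\sum_i P(A_i) \geq 1 - \epsilon/2 \geq 1/2$ to deduce $P(A_i) \geq 1/(2\alpha N)$, so a random query set of size $t = \tilde O(\alpha N)$ hits every $A_i$ with probability $\geq 1-\delta/3$ by a coupon-collector/union-bound calculation. The key tree-theoretic claim is that the coarsest label-homogeneous pruning cannot cut strictly inside $\hat A_i$: the subtrees of $T$ containing a fixed leaf $z \in \hat A_i$ form a chain under inclusion (they are exactly the subtrees rooted at ancestors of $z$), and $\hat A_i$ is a label-homogeneous element of this chain, so the maximal label-homogeneous subtree through $z$ must contain $\hat A_i$. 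Hence for each $i$ there is a unique piece $\hat B_{k(i)} \supseteq \hat A_i$ of the pruning, and since at least one queried label sits inside $\hat A_i$ the predicted label of $\hat B_{k(i)}$ is forced to be $\truelabel(A_i)$.

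For any test point $x \in A_i$ the covering argument of Theorem~\ref{thm:hamDPO} shows that the nearest sample to $x$ lies in $\hat A_i \subseteq \hat B_{k(i)}$, so $\hat f(x) = \truelabel(x)$, and the total error is bounded by the mass of $\{p < \lambda\}$ (at most $\lambda \cdot \vol(\K) = \epsilon/2$) plus the failure probabilities, yielding error $\leq \epsilon$. The main obstacle is the tree-theoretic step: both directions of the pruning analysis must be ruled out---that the coarsest pruning could descend into $\hat A_i$ (blocked because $\hat A_i$ is itself a label-homogeneous node) and that it could ascend into an ancestor merging $\hat A_i$ with some $\hat A_j$ of a different class (blocked because, conditional on the coverage event, that ancestor contains queried labels of both classes and is therefore not label-homogeneous).
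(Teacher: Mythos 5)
Your proposal is correct and follows essentially the same route as the paper's proof: the covering argument from Theorem~\ref{thm:hamDPO} for chain-connectivity within each $A_i$, Lemma~\ref{lem:separation} for the $>g$ gap across classes, the balance assumption plus a coupon-collector bound to hit every $A_i$ with at least one label, and the observation that the coarsest label-homogeneous pruning cannot sever the samples within a single high-density component. One small imprecision: $\hat A_i = A_i \cap \sampleset$ need not itself be a node of $T$, since a low-density sample within $g/2$ of $A_i$ may get absorbed into the cluster before all of $\hat A_i$ coalesces, so ``$\hat A_i$ is an element of the chain of ancestors of $z$'' is not literally true. The fix is routine and preserves your argument: work instead with the subtree of $T$ obtained by performing all merges at height $\leq g/2$ that contains $z$ --- by the covering argument it contains all of $\hat A_i$, and by Lemma~\ref{lem:separation} it is label-homogeneous (every pair inside it is connected by hops $< g$, hence same class) --- and this is precisely what the paper compresses into the assertion that any two samples within distance $g$ land in the same pruning piece $\hat B_i$.
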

\begin{proof}
  Define $\lambda = \epsilon / (2\vol(K))$ and let $A_1$, \dots, $A_N$
  be the connected components of $\{p \geq \lambda\}$. Suppose that
  each $A_i$ set has probability mass at least $\gamma$. Under the
  assumption that the probability mass of the largest $A_i$ is at most
  $\alpha$ times the mass of the smallest, we have that
  $\gamma \geq (1-\epsilon) / (\alpha N)$, but the result holds for
  any arbitrary lower bound $\gamma$.

  Since we query the labels of points without replacement, the set of
  labeled examples is an iid sample from the data density
  $p$. Whenever $m \geq \frac{2}{\gamma} \ln \frac{2N}{\delta}$, with
  probability at least $1-\delta/2$, every set $A_i$ will contain at
  least one labeled example, since they each have probability mass at
  least $\gamma$. Assume this high probability event holds.
  
  Let $g$ be the margin between classes that is guaranteed by
  Lemma~\ref{lem:separation}. Whenever samples $x, x' \in \sampleset$
  have $\norm{x - x'} \leq g$, they must belong to the same cluster
  $\hat B_i$. Applying an identical covering-style argument as in
  Theorem~\ref{thm:hamDPO}, we have that with probability at least
  $1-\delta/2$, for every $A_i$ set there is a cluster, say
  $\hat B_i$, such that:
  \begin{enumerate}
  \item All samples in $A_i \cap \sampleset$ are contained in $\hat
    B_i$.
  \item For every $x \in A_i$, the nearest cluster to $x$ is
    $\hat B_i$.
  \end{enumerate}

  Since every $A_i$ set contains at least one labeled example, it
  follows that whenever two of these high-density clusters belong to
  different classes, they will contain differently labeled points and
  therefore will not have been merged by
  Algorithm~\ref{alg:activehierarchicallinkage}. It follows that the
  label of $\hat B_i$ must agree with the label of $A_i$. At this
  point, the error analysis follows identically as in
  Theorem~\ref{thm:hamDPO}.
\end{proof}

In Section~\ref{sec:agnostic} we describe a meta-argument that can be
used to extend our results into the agnostic setting, where we no
longer require that the output code is consistent. Details for the
error correcting case are given in Section~\ref{apdx:hamdp1}.

In this section we showed that when there exists linear error
correcting correcting output code with low error, then it is possible
to reduce the label complexity of learning to the number of
high-density clusters, which are the connected components of
$\{p \geq \epsilon\}$. The label complexity of our algorithms is
always linear in the number of high density clusters, while the
worst-case unlabeled complexity of our algorithms is exponential in
the dimension (or intrinsic dimension).

\section{One-Versus-All on the Unit Ball}
\label{sec:onevsall}

\begin{figure}
  \centering
  \includegraphics[width=0.4\columnwidth]{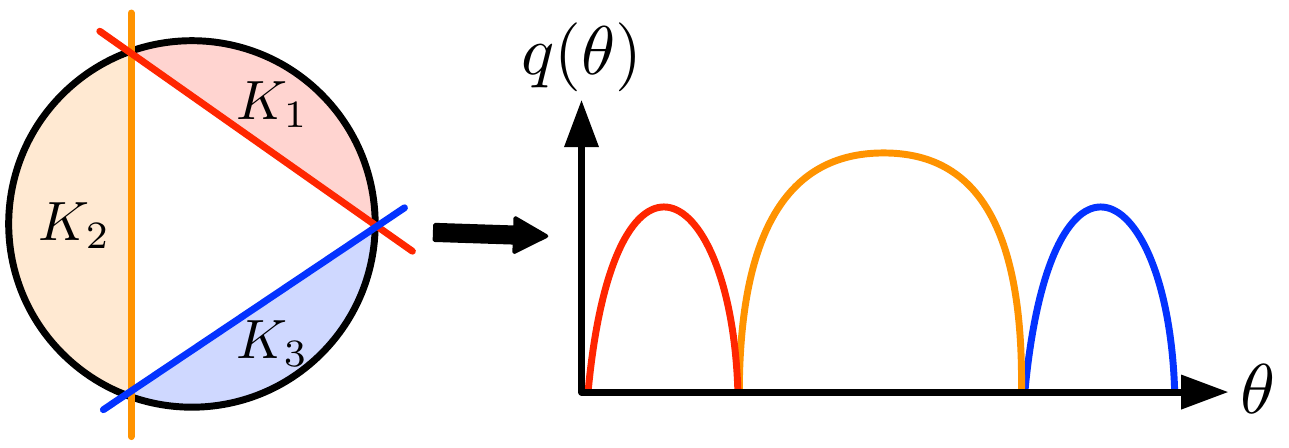}
  \caption{An example problem satisfying Assumption~\ref{assm:ova} and
    the projected density $q$ when the density $p$ is uniform on
    $K$.}
\label{fig:onevsall}
\end{figure}

In this section we show that even when the codewords are not well
separated, we can still exploit the implicit structure of output codes
to reduce the label complexity of learning by clustering the
data. Specifically, we consider the implicit structure of a linear
one-vs-all classifier over the unit ball:
\begin{assm}
  \label{assm:ova}
  The instance space $\X$ there exist $L$ linear separators $h_1$,
  \dots, $h_L$ such that: (1) point $x$ belongs to class $i$ iff
  $h_i(x) > 0$, and (2) for all $i$, $h_i(x) = w_i^\tp x - b_i$ with
  $\norm{w_i} = 1$ and $b_i \geq \bmin > 0$.
\end{assm}

See Figure~\ref{fig:onevsall} for an example problem satisfying this
condition. Since a one-vs-all classifier is an output code where the
code matrix is the identity, the Hamming distance between any pair of
codewords is exactly 2. Therefore, in this setting we do not have a
result similar to Lemma~\ref{lem:separation} to ensure that the
classes are geometrically separated. Instead, we exploit the
one-vs-all structure to show the classes are probabilistically
separated and employ a robust clustering algorithm.

As before, we study this problem under a mild constraint on the data
distribution. For each class $i$ denote the set of points in class $i$
by $K_i = \setc{x}{\norm{x} \leq 1, h_i(x) > 0}$ and
$K = \bigcup_{i=1}^L K_i$. In this section, we assume that the density
$p$ is supported on $K$ with upper and lower bounds:
\begin{assm}
  \label{assm:nearlyuniform}
  There exist constants $0 < \clb \leq \cub$ s.t. for $x \in K$ we
  have $\clb \leq p(x) \leq \cub$ and otherwise $p(x) = 0$.
\end{assm}
This distributional constraint is quite general: it only requires that
we will not observe examples for which the one-vs-all classifier would
be confused (i.e., where none of its linear separators claim the
point) and that the density does not take extreme values. When $K$ is
compact, every continuous density supported on $K$ satisfies
Assumption~\ref{assm:nearlyuniform}.
  
Our algorithm for this setting first projects the data onto the unit
sphere $\unitsphere = \setc{x \in \reals^d}{\norm{x} = 1}$ and then
applies a robust clustering algorithm to the projected data. The
projection does not introduce any errors, since the label of an
example is independent of its distance to the origin. This is because
each linear separator carves out a spherical cap for its class, and no
two class caps overlap. Since we assume that no class contains the
origin, it follows that an example’s label depends only on its
projection to the sphere. We show that projecting to the sphere has
the useful property that the projected density goes to zero at the
boundary of the classes, which suggests that we can use a robust
single-linkage style clustering algorithm to find label-homoegeneous
clusters. Algorithm~\ref{alg:ova} gives pseudocode, using the notation
$\ang(u,v) = \arccos(u^\tp v)$ for the angle between $u$ and $v$ and
$V^d(r)$ is the probability that a uniformly random sample from
$\unitsphere$ lands in a given spherical cap of angular radius $r$.

\begin{myalg}[htb]
  \begin{framed}
    \hangindent=0.7cm {\bf Input:} Sample
    $\sampleset = \{x_1, \dots, x_n\}$, radius $r_c >0$.
    \begin{enumerate}[noitemsep,nolistsep,leftmargin=*]
    \item Define $r_a = r_c/2$ and $\tau = \frac{\clb}{2\cub}V^d(r_a)\epsilon$.
    \item Let $v_i = \frac{x_i}{\norm{x_i}}$ be the projection of
      $x_i$ to the sphere.
    \item Mark $v_i$ active if
      $|\setc{v_j}{\ang(v_i,v_j) \leq r_a}| \geq \tau n$ and inactive
      otherwise for $i \in [n]$.
    \item Let $\hat A_1$, \dots, $\hat A_N$ be the connected
      components of the graph $G$ whose vertices are the active $v_i$
      with an edge between $v_i$ and $v_j$ if $\ang(v_i, v_j) < r_c$.
    \item In decreasing order of size, query the label of each
      $\hat A_i$ until $\leq \frac{\epsilon}{4}n$ points belong to
      unlabeled clusters.
    \item Output $\hat f(x) =$ label of nearest cluster to
      $x/\norm{x}$.
    \end{enumerate}
  \end{framed}
  \caption{Robust single-linkage learning.\vspace{-0.2cm}}
  \label{alg:ova}
\end{myalg}

Our first result characterizes the density of the projected data
(defined relative to the uniform distribution on $\unitsphere$).

\begin{restatable}{lem}{ProjectedDensityLemma}
\label{lem:projectedDensity}
Suppose Assumptions~\ref{assm:ova} and \ref{assm:nearlyuniform} hold
and let $q:\unitsphere \to [0,\infty)$ be the density function of the
data projected onto the unit sphere. Then
$\ql(v) \leq q(v) \leq \qu(v)$, where
\[
  \ql(v) = \begin{cases}
    \clb d v_d (1 - (b_i / w_i^\tp v)^d) & \hbox{if $v \in K_i$} \\
    0 & \hbox{otherwise},
  \end{cases}
\]
and $\qu(v) = \cub/\clb \cdot \ql(v)$, where $v_d$ is the volume of
the unit ball in $d$ dimensions.
\end{restatable}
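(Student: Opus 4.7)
The plan is a direct change of variables into spherical coordinates. Writing any nonzero $x \in \reals^d$ as $x = rv$ with $r>0$ and $v \in \unitsphere$, the Lebesgue volume element factors as $dx = r^{d-1}\,dr\,d\sigma(v)$, where $\sigma$ is the surface measure on $\unitsphere$. Since $\X$ lies in the unit ball, the projected distribution satisfies
$$P_{\rm proj}(A) = P\bigl(\{x : x/\norm{x} \in A\}\bigr) = \int_{A} \int_0^1 p(rv)\,r^{d-1}\,dr\,d\sigma(v)$$
for every measurable $A \subset \unitsphere$. Converting from $\sigma$ to the uniform probability measure on $\unitsphere$ by dividing by the surface area $dv_d$ identifies $q(v)$ with a constant multiple of the inner integral $\int_0^1 p(rv)\, r^{d-1}\, dr$, so the lemma reduces to sandwiching this one-dimensional integral.

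Next I would pin down which portion of the ray $\{rv : 0 < r \leq 1\}$ actually hits the support $K = \bigcup_i K_i$. Since Assumption~\ref{assm:ova} makes the classes $K_i$ pairwise disjoint, $v \in K_i$ forces $w_j^\tp v \leq b_j$ for every $j \neq i$, which together with $r \leq 1$ gives $h_j(rv) = r w_j^\tp v - b_j \leq rb_j - b_j \leq 0$ and hence $rv \notin K_j$ for $j \neq i$. Meanwhile $h_i(rv) > 0$ iff $r > b_i/w_i^\tp v$, so the ray meets $K$ exactly in $\{rv : b_i/w_i^\tp v < r \leq 1\}$, and the inner integral collapses to $\int_{b_i/w_i^\tp v}^1 p(rv)\, r^{d-1}\, dr$. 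For $v$ outside every $K_i$ the same disjointness argument forces the integral to vanish, giving $q(v) = 0$ as stated.

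With the integration region in hand, applying the pointwise bounds $\clb \leq p \leq \cub$ from Assumption~\ref{assm:nearlyuniform} and evaluating $\int_{b_i/w_i^\tp v}^1 r^{d-1}\, dr = \frac{1}{d}(1 - (b_i/w_i^\tp v)^d)$ immediately sandwiches $q(v)$ between the stated $\ql(v)$ and $\qu(v)$; the identity $\qu = (\cub/\clb)\ql$ is then visible from their shared algebraic form. The argument is essentially spherical Fubini combined with the uniform density bounds, so I do not anticipate a real obstacle. The one step that requires care is the disjointness check ruling out contributions from other classes along the ray through $v \in K_i$, which is exactly where the constraint $r \leq 1$ (i.e., that $\X$ is contained in the unit ball) is crucially used.
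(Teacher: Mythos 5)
Your proposal takes essentially the same route as the paper's proof: pass to spherical coordinates, obtain $q(v)$ as $d v_d$ times the radial integral $\int_0^1 p(rv)\,r^{d-1}\,dr$, and then sandwich that integral using the pointwise bounds on $p$ together with the observation that for $v\in K_i$ the ray meets the support exactly in the interval $r \in (b_i/(w_i^\tp v), 1]$. The only cosmetic difference is that you first argue explicitly that the ray cannot meet $K_j$ for $j \neq i$ (the disjointness check exploiting $r\le 1$), whereas the paper absorbs this by summing the indicators $\ind{rv\in K_i}$ over $i$ and noting that after integration only the $i$-th term survives when $v \in K_i$; both are correct and equivalent.
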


\begin{proof}
  Let $X \sim p$ be and set $V = X / \norm{X}_2$ so that $V$ is a
  sample from $q$.  For any set $A \subset \sphere$, we know that
  $\prob(V \in A) = \prob(X \in \cone(A))$, where
  $\cone(A) = \setc{rv}{r > 0, v \in A}$, which gives
  \[
    \prob(V \in A)
    = \prob(X \in \cone(A)) 
    = \int_{x \in \cone(A)} p(x)\, dx 
    = \int_{v \in A}  d v_d \int_{r = 0}^\infty p(rv)r^{d-1} \, dr \, d\sam(v),
  \]
  where the last inequality follows by a change of variables $x$ to
  $(r,v)$ where $r = \norm{x}_2$ and $v = x / \norm{x}_2$. The term
  $r^{d-1}$ is the determinant of the Jacobian of the change of
  variables, and the term $d v_d$, which is the surface area of
  $\sphere$, appears since $\sam$ is normalized so that
  $\sam(\sphere) = 1$. From this, it follows that the density function
  $q$ can be written as
  \begin{equation}
    \label{eq:qintegral}
    q(v) = d v_d \int_{r = 0}^\infty p(rv)r^{d-1} \, dr,
  \end{equation}
  since integrating this function over any set $A$ gives the
  probability that $V$ will land in $A$. From our assumptions on $p$,
  we know that
  \[
    p(rv) \geq \sum_{i=1}^L \ind{rv \in K_i} \clb.
  \]
  Moreover, we can rewrite the indicator as $\ind{rv \in K_i} =
  \ind{\frac{b_i}{w_i^\tp v} < r \leq 1}$. Substituting this into
  \eqref{eq:qintegral} gives
  \begin{align*}
    q(v)
    &\geq \sum_{i=1}^L \clb d v_d \int_{r=0}^\infty \ind[\bigg]{\frac{b_i}{w_i^\tp v} < r \leq 1} r^{d-1}\, dr \\
    &= \sum_{i=1}^L \ind{v \in K_i} \clb d v_d \int_{r=b_i/(w_i^\tp v)}^1  r^{d-1}\, dr \\
    &= \sum_{i=1}^L \ind{v \in K_i} \clb d v_d (1 - b_i / (w_i^\tp v)^d) \\
    &= \ql(v)
  \end{align*}
  Note that the indicator $\ind{v \in K_i}$ appears in line 2 because
  the integral is only non-zero when $b_i / (w_i^\tp v) < 1$, which is
  exactly the condition that $v \in K_i$. The upper bound on $q$
  follows by an identical argument using the upper bound on $p(rv)$.
\end{proof}

Both bounds are defined piece-wise with one piece for each
class. Restricted to class $i$, both the $\ql(v)$ and $\qu(v)$ are
decreasing functions of $\ang(w_i, v)$, which implies that their
$\lambda$-level sets are spherical caps. Therefore, each class
contributes one large connected component to the level set of $q$ that
is roughly a spherical cap centered at the point $w_i$ and the density
of $q$ goes to zero at the boundary of each class. Our main result is
as follows:

\begin{restatable}{thm}{onevsallTheorem}
  \label{thm:onevsall}
  Suppose Assumptions~\ref{assm:ova} and \ref{assm:nearlyuniform} hold
  and that $f^*$ is consistent. There exists an $r_c$ satisfying
  $r_c = \Omega( \epsilon \clb / (\cub^2\bmin))$ such that with
  probability at least $1-\delta$, running Algorithm~\ref{alg:ova}
  with parameter $r_c$ on an unlabeled sample of size
  $n = \tilde O((\cub^4d/(\epsilon^2 \clb^2 \bmin^2))^d)$ will query
  at most $L$ labels and output a classifier with error at most
  $\epsilon$.
\end{restatable}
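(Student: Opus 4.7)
The plan is to mimic the error-correcting template of Theorem~\ref{thm:hamDPO}, but with the geometric margin $g$ between classes replaced by a \emph{probabilistic} margin: by Lemma~\ref{lem:projectedDensity} the projected density $q$ vanishes as $v$ approaches each class boundary $\{w_i^\tp v = b_i\}$ on the sphere, so there is a low-density ``band'' around each boundary that lets a robust single-linkage clustering separate the classes.

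First I would pick a level $\lambda > 0$ of the form $\lambda = \Theta(\clb d v_d \cdot \epsilon \clb/\cub)$ and define $A_i = \{v \in K_i \cap \sphere : \ql(v) \geq \lambda\}$. A short calculation from Lemma~\ref{lem:projectedDensity} shows that each $A_i$ is a sub-cap of $K_i \cap \sphere$ whose angular distance to $\partial K_i \cap \sphere$ is at least $\Omega(\epsilon \clb/(\cub^2 \bmin))$ (the $\bmin$ comes from converting ``$w_i^\tp v - b_i$ is bounded away from $0$'' into an angular gap), and that the mass of $\sphere \setminus \bigcup_i A_i$ under $q$ is at most $\epsilon/2$ (using $\qu \leq (\cub/\clb)\ql$ to bound the mass of the band). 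I would then choose $r_c$ to be a constant fraction of this angular gap, so that: (i) any two distinct $A_i$, $A_j$ are more than $2r_c$ apart on the sphere; and (ii) each $A_i$ is connected at scale $r_c$. This determines the claimed $r_c = \Omega(\epsilon \clb/(\cub^2 \bmin))$.

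Next I would run a VC-uniform convergence argument over spherical caps of angular radius $r_a = r_c/2$. These caps have VC-dimension $O(d)$, so for $n = \tilde O((\cub^4 d/(\epsilon^2\clb^2\bmin^2))^d)$ samples, empirical cap masses are within $\tau/2$ of their true values simultaneously with probability at least $1-\delta/2$, where $\tau = \frac{\clb}{2\cub}V^d(r_a)\epsilon$ is the algorithm's activity threshold. From this one gets: every projected point $v_i \in \bigcup_j A_j$ is marked active (its $r_a$-cap has $q$-mass at least $\tau$, matching the core lower bound); no $v_i$ whose projected density is far below $\lambda$ is marked active; and $\sampleset$ is an $r_c/2$-angular cover of each $A_j$. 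These three facts plus the connectivity-and-separation from step~1 give the same cluster-recovery conclusion as Theorem~\ref{thm:hamDPO}: single-linkage on the active points produces exactly one cluster $\hat A_j$ per $A_j$, and any test point $x$ with $x/\norm{x} \in A_j$ has $\hat A_j$ as its nearest cluster.

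The error analysis then closes in the same way as Theorem~\ref{thm:hamDPO}: misclassified mass is at most the mass outside $\bigcup_j A_j$ (at most $\epsilon/2$ by construction) plus the mass of small clusters whose labels were not queried (at most $\epsilon/2$ by a Hoeffding union bound over subsets of $[L]$), giving total error $\epsilon$; and since each class contributes exactly one cluster, at most $L$ labels are queried. The main obstacle I anticipate is the joint calibration of $\lambda$, $r_c$, and $\tau$: $\lambda$ must be small enough that the cores cover $1-\epsilon/2$ mass but large enough that the induced angular gap supports a usable $r_c$, and simultaneously $\tau$ must be small enough that true core points pass the activity test but large enough that boundary-band points do not. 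This juggling act is exactly where the $\cub/\clb$ ratio and the $\bmin$ factor enter the final rate.
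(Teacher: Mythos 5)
Your proposal follows essentially the same route as the paper's proof: project to the sphere, treat the vanishing of the projected density near each $\partial K_i$ as a probabilistic margin, calibrate an angular gap of order $\epsilon\clb/(\cub^2\bmin)$ and an activity threshold $\tau$, run a VC bound over spherical caps, and close with a Hoeffding union bound over the $2^L$ unions of cores. The only structural difference is organizational: the paper first abstracts the robust-linkage argument into the $(r_c,r_a,\tau,\gamma)$-clusterability definition and Lemma~\ref{lem:robustlinkage}, then instantiates it with cores $A_i = \{\qub{i} \ge \epsilon\}$, whereas you argue directly and define cores via the lower envelope $\ql$ — a symmetric choice that works equally well. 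One point to be careful about when you flesh this out: the pair of facts ``cores are $> 2r_c$ apart'' and ``points with density far below $\lambda$ are inactive'' does not by itself rule out active bridges between cores, because the gap region is not uniformly low-density — near each core boundary $\ql$ is close to $\lambda$. What the paper actually exhibits is an explicit separating band $S$ (the $r_a$-annuli inside each $\rub{i}(0)$ cap) of width $\geq r_c$ on which the $r_a$-cap mass is provably below $\tau-\gamma$; combined with the fact that every path between cores crosses $S$, this is what kills all graph edges across the gap. Your calibration step would need to produce exactly such a band, which is where the Taylor expansion of $\rub{i}$ and $\rlb{i}$ around $\lambda = 0$ enters and yields the stated $r_c$ scaling. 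Finally, your suggested level $\lambda = \Theta(\clb d v_d\,\epsilon\clb/\cub)$ carries a spurious factor of $\clb d v_d$ (the normalizing constant of $\ql$); to bound the mass of $\sphere \setminus \bigcup_i A_i$ by $\epsilon/2$ via $\qu \leq (\cub/\clb)\ql$ you want $\lambda = \Theta(\epsilon\clb/\cub)$ without that factor.
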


Note that if the scale parameter $\bmin$ is unknown, the conclusion of
Theorem~\ref{thm:onevsall} continues to hold if the connection radius
$r_c$ and unlabeled sample complexity $n$ are set using a
conservatively small estimate $\widehat \bmin$ satisfying
$\widehat \bmin \leq \bmin$. This comes at the cost of an increased
unlabeled sample complexity.

Before proving Theorem~\ref{thm:onevsall}, we develop some general
results for the robust linkage clustering algorithm.  More generally,
Algorithm~\ref{alg:ova} can be applied in any metric space
$(\X, \dist)$ by replacing $\ang$ with the distance metric $d$ and
suitable settings for the internal parameters $r_a$ and $\tau$. For
the robust linkage approach to have low error, each class should have
one large connected component in the graph $G$ constructed by the
algorithm so that: (1) with high probability a new point in class $i$
will be nearest to that largest component, and (2) the large
components of different classes are separated. Intuitively, $G$ will
have these properties if each positive region $\K_i$ has a connected
high-density inner region $A_i$ covering most of its probability mass
and when it is rare to observe a point that is close to two or more
classes. This notion is formalized below.

Let $S$ be any set in $\X$. We say that a path $\pi : [0,1] \to \X$
\emph{crosses} $S$ if the path starts and ends in different connected
components of the complement of $S$ in $\X$ and we say that the
\emph{width} of $S$ is the length of the shortest path that crosses
$S$.

\begin{defn}
  The sets $A_1$, \dots, $A_\numclasses$ are $(r_c, r_a, \tau,
  \gamma)$-clusterable under probability $P$ if there exists a separating set
  $S$ of width at least $r_c$ such that:
  (1) Each $A_i$ is connected;
  (2) If $x \in \X$ satisfies $\dist(x, A_i) \leq r_c / 3$ then $\prob_{X \sim P}(X \in \ball(x, r_a)) > \tau + \gamma$;
  (3) If $x \in A_i$ then $\prob_{X \sim P}(X \in \ball(x, r_c/3)) > \gamma$;
  (4) Every path from $A_i$ to $A_j$ crosses $S$; and
  (5) If $x \in S$ then $\prob_{X \sim P}(X \in \ball(x,r_a)) < \tau - \gamma$.
\end{defn}

Note that typically there must be a gap between the set $A_i$ and the
set $S$ in order to satisfy the probability requirements (i.e., the
set $S$ will be smaller than $\X - \bigcup_{i=1}^L A_i$). The first
three properties ensure that each set $A_i$ will have one large
connected component and the remaining two properties ensure that these
connected components will be disconnected. Following an analysis
similar to that of the cluster tree algorithm of \citet{Chaudhuri2010}
gives the following result.

\begin{restatable}{lem}{RobustLinkageLemma}
  \label{lem:robustlinkage}
  Suppose that the sets $A_1$, \dots, $A_N$ are
  $(r_c, r_a, \tau, \gamma)$-clusterable with respect to distribution
  $P$. For any failure probability $\delta > 0$, let $G$ be the graph
  constructed by Algorithm~\ref{alg:ova} run on a sample $\sampleset$
  of size $O(\frac{1}{\gamma^2}(D + \ln \frac{1}{\delta})$, where $D$
  is the VC-dimension of balls in $(\X, d)$, with parameters and
  $r_c$, $r_a$, and $\tau$. Define
  $K_i = \setc{x \in S}{d(x,A_i) \leq r_c/3}$ for each $i \in
  [N]$. With probability at least $1-\delta$, the graph $G$ has the
  following properties:
  \begin{enumerate}
  \item {\bf Complete:} For each $i$, all samples in $K_i$ are active and
    included in the graph $G$.
  \item {\bf Separated:} For any $i \neq j$, there is no path in $G$ from
    $K_i$ to $K_j$.
  \item {\bf Connected:} For every $i$, the set $K_i$ is connected in $G$.
  \item {\bf Extendible:} For any point $x \in A_i$, the nearest
    connected component of $G$ to $x$ contains $K_i$.
  \end{enumerate}

\end{restatable}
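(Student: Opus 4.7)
The plan is to reduce every claim to a single uniform-concentration event over balls, and then argue geometrically from the five clusterability conditions. First I would apply the standard VC bound for the class of balls in $(\X, d)$, which has VC-dimension $D$: on a sample of size $n = O\bigl(\frac{1}{\gamma^2}(D + \ln \frac{1}{\delta})\bigr)$, with probability at least $1 - \delta$ every ball $B$ in $(\X, d)$ satisfies $\bigl||B \cap \sampleset|/n - P(B)\bigr| < \gamma$. Condition on this event; the rest of the proof is deterministic. Under it, the active/inactive status of each sample is forced by conditions (2) and (5): any sample $v$ with $d(v, A_i) \leq r_c/3$ has $P(B(v, r_a)) > \tau + \gamma$ and so is marked active, while any sample lying in the separator $S$ has $P(B(v, r_a)) < \tau - \gamma$ and so is marked inactive. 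This already yields \textbf{Complete}, since every sample in $K_i$ lies within $r_c/3$ of $A_i$ and hence is active.

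The key step is \textbf{Separated}. Every edge of $G$ joins two active samples at distance strictly less than $r_c$, and both endpoints lie in $\X \setminus S$ because samples in $S$ have been deactivated. If the two endpoints were in different connected components of $\X \setminus S$, the straight segment between them would be a path crossing $S$, and therefore would have length at least the width of $S$, which is at least $r_c$ --- contradicting the edge length. Hence each edge, and by transitivity each connected component of $G$, is confined to a single connected component of $\X \setminus S$. Condition (4) places $A_i$ and $A_j$ in different such components for $i \neq j$, so no path in $G$ connects $K_i$ to $K_j$.

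For \textbf{Connected} and \textbf{Extendible} I would follow a covering argument analogous to the one in Theorem~\ref{thm:hamDPO}. Condition (3) says every ball $B(x, r_c/3)$ centered at $x \in A_i$ has mass above $\gamma$, so under the concentration event it must contain at least one sample; hence $\sampleset$ is an $r_c/3$-cover of $A_i$. Given $u, v \in K_i$, connectedness of $A_i$ (condition (1)) provides a continuous path between points close to $u$ and $v$; chaining $r_c/3$-balls along this path and selecting one sample per ball yields a sequence of active samples (each within $r_c/3$ of $A_i$) with consecutive distances below $r_c$, giving a path in $G$ and proving Connected. For Extendible, the cover supplies a sample in $B(x, r_c/3) \subset K_i$ for any $x \in A_i$; and any sample in a different component of $G$ is, by Separated, either in a different component of $\X \setminus S$ (hence at distance at least $r_c$ from $x$ by the width bound) or at distance strictly greater than $r_c/3$ from $A_i$ (hence strictly further from $x$). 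Either way the nearest component of $G$ to $x$ is the one containing $K_i$.

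The main obstacle is \textbf{Separated}: a priori one might worry that a long polygonal path of many short edges could collectively trace from $K_i$ to $K_j$ across the separator even though no individual edge can. The width condition rules this out precisely because it is a per-segment constraint rather than a cumulative one: each edge is individually confined to one component of $\X \setminus S$, and this confinement propagates along the path by transitivity of connectivity.
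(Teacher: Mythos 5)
Your proof is correct and follows essentially the same approach as the paper: a uniform VC concentration event for balls, which forces the active/inactive status via conditions (2) and (5), a width-based argument for separation, a covering/chaining argument for connectedness, and a covering plus distance-comparison argument for extendibility. The only cosmetic difference is that for Separated you frame the width constraint edge-by-edge (each edge stays in one component of $\X \setminus S$, and this propagates by transitivity), whereas the paper phrases it as a property of a whole path, and for Extendible you split into two cases by whether the competing sample lies in the same component of $\X \setminus S$ as $x$, whereas the paper simply invokes the triangle inequality to connect the nearest active sample to the one guaranteed inside $B(x, r_c/3)$; both variants are correct and rely on the same facts.
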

\begin{proof}
  The proof technique used here follows a similar argument as
  \citet{Chaudhuri2010}.

  We use a standard VC bound \citep{VC} to relate the probability
  constraints in the clusterability definition to the empirical
  measure $\hat P$. For our value of $n$ we have
  \[
  \prob\bigl(
    \sup_{x,r} \bigl| \hat P(\ball(x,r)) - P(\ball(x,r)) \big| > \gamma
  \bigr) < \delta.
  \]
  This implies that with probability at least $1-\delta$ for all
  points $x$ we have: (1) if $\dist(x,A_i) \leq \frac{r_c}{3}$ for any
  $i$ then $\hat P(B(x,r_a)) > \tau$; (2) if $x \in S$ then
  $\hat P(B(x,r_a)) < \tau$; and (3) if $x \in A_i$ for any $i$ then
  $\hat P(B(x,\frac{r_c}{3})) > 0$.  We now use these facts to prove
  that the graph $G$ has the completeness, separation, and
  connectedness properties.

  Completeness follows from the fact that every sample
  $x \in \hat K_i$ is within distance $r_c/3$ of $A_i$ and therefore
  $\hat P(B(x, r_a)) > \tau$.

  To show separation, first observe that every sample
  $z \in \sampleset$ that belongs to $S$ will be marked as inactive,
  since $\hat P(\ball(z, r_a)) < \tau$. Now let $x \in \hat K_i$ and
  $x' \in \hat K_j$ for $i \neq j$. Since the graph $G$ does not
  contain any samples in the set $S$, any path in $G$ from $x$ to $x'$
  must have one edge that crosses $S$. Since the width of $S$ is at
  least $r_c$, this edge would not be included in the graph $G$, and
  therefore $G$ does not include a path from $x$ to $x'$.

  To show connectedness, let $x$ and $x'$ be any pair of samples in
  $\hat K_i$ and let $v$ and $v'$ be their nearest points in $A_i$,
  respectively. By definition of $\hat K_i$, we know that
  $d(x, v) < r_c/3$ and $d(x', v') < r_c/3$. Since $A_i$ is a
  connected set, there is a path $\pi : [0,1] \to A_i$ in $A_i$
  starting at $v$ and ending at $v'$. Cover the path $\pi$ with a
  sequence of points $z_1$,~\dots,~$z_k$ such that
  $d(z_j, z_{j+1}) < r_c/3$ for all $j$ and the path $\pi$ is covered
  by the balls $\ball(z_j, r_c/3)$.  Further, choose $z_1 = v$ and
  $z_k = v'$. Since each point $z_j$ belongs to $A_i$, the empirical
  probability mass of the ball $\ball(z_j, r_c/3)$ is non-zero, which
  implies that it must contain at least one sample point, say
  $y_j \in \sampleset$. We may take $y_1 = x$ and $y_k = x'$. Since
  every sample $y_1$, \dots, $y_k$ is within distance $r_c/3$ of
  $A_i$, they are all active and included in the graph $G$. Moreover,
  since $d(y_j, y_{j+1}) < r_c$, we have that the path
  $x = y_1 \to \dots \to y_k = x'$ is a path connecting $x$ and $x'$
  in $G$, as required.
 
  Finally to show extensibility, let $x \in A_i$ be any point. By the
  uniform convergence for balls, $P(x, r_c/3)$ has non-zero empirical
  probability mass and therefore contains at least one active sample,
  say $z$. Since $z$ is within distance $r_c/3$ of $A_i$, it belongs
  to the set $K_i$. Now let $z^*$ be the closest active sample to
  $x$. We must have $d(x, z^*) \leq d(x, z) \leq r_c/3$ and it follows
  that $d(z, z^*) \leq d(z, x) + d(x, z^*) \leq 2r_c / 3 <
  r_c$. Therefore, $z^*$ also belongs to $K_i$, as required.
\end{proof}

We now prove Theorem~\ref{thm:onevsall} by combining
Lemmas~\ref{lem:projectedDensity} and \ref{lem:robustlinkage}:
\begin{proof}[Proof of Theorem~\ref{thm:onevsall}]
  For each class $i \in [L]$, define
  $A_i = \{\qub{i} \geq \epsilon\}$. We will show that the sets $A_1$,
  \dots, $A_L$ are $(r_c, r_a, \gamma, \tau)$-clusterable for
  appropriate choices of the parameters. Then
  Lemma~\ref{lem:robustlinkage} will guarantee that with high
  probability, the clustering produced by Algorithm~\ref{alg:ova} will
  approximate the connected components of the $\epsilon$-level of
  $\{\qu \geq \epsilon\}$.

  Recall that for each class $i \in [L]$, the sets
  $\{\qub{i} \geq \epsilon\}$ and $\{\qlb{i} \geq \epsilon\}$ are
  spherical caps. To simplify notation, let
  $\scap(u,r) = \setc{v \in \unitsphere}{\ang(v,u) \leq r}$ denote the
  spherical cap of angular radius $r$ centered at $u$. Let
  $\rub{i}(\lambda) = \arccos(b_i(1 - \lambda / (\cub d v_d))^{-1/d})$
  denote the angular radius of $\{\qub{i} \geq \epsilon\}$, so that
  $\{\qub{i} \geq \epsilon\} = \scap(w_i, \rub{i}(\epsilon))$, and
  $\rlb{i}(\lambda)$, defined similarly, be the angular radius of
  $\{\qlb{i} \geq \epsilon\}$. Define
  $\tilde \epsilon = \frac{\clb}{\cub}\epsilon$ and suppose for the
  moment that we can find an activation radius $r_a > 0$ small enough
  so that the following inequalities hold for all classes
  $i = 1, \dots, L$:
  \[
    \frac{5}{3} r_a \leq \rlb{i}\bigr(\frac{3\tilde \epsilon}{4}\bigl)
    - \rub{i}(\epsilon) \qquad \hbox{and} \qquad 2 r_a \leq \rub{i}(0)
    - \rub{i}\bigl(\frac{\tilde \epsilon}{4}\bigr).
  \]
  Given such an activation radius, we will show that the sets $A_1$,
  \dots, $A_L$ are $(r_c, r_a, \tau, \gamma)$-clusterable with
  $r_c = 2r_a$, $\tau = \frac{\tilde \epsilon \scapvol(r_a)}{2}$, and
  $\gamma = \frac{\tilde \epsilon \scapvol(r_c/3)}{4}$ and the
  separating set is
  $S = \setc{v \in \sphere}{\ang(v, w_i) \geq \rub{i}(0) - r_a \hbox{
      for all $i$}}$:
\begin{enumerate}

\item {\it Connection:} Each $A_i$ set is a spherical cap and therefore connected.

\item {\it High-density near $A_i$:} Let $v \in \sphere$ be such that $\ang(v, A_i) <
r_c / 3$ and let $u \in \scap(v, r_a)$ be any point in the spherical cap of
angular radius $r_a$ about $v$. By the triangle inequality, we know that
$\ang(w_i, u) \leq \ang(w_i, v) + \ang(v, u) \leq \rub{i}(\epsilon) +
\frac{5}{3} r_a \leq \rlb{i}(\frac{3\tilde \epsilon}{4})$. This implies that
$q(u) \geq \frac{3 \tilde \epsilon}{4}$ for all points in $\scap(v,r_a)$ and
therefore $\prob_{V \sim q}(V \in \scap(v, r_a)) \geq \frac{4 \tilde
\epsilon}{3} \scapvol(r_a) \geq \tau + \gamma$.

\item {\it High-density inside $A_i$:} Now let $v \in A_i$. Since
  $r_c / 3 < r_a$, the above arguments show that
  $q(u) \geq \frac{4 \tilde \epsilon}{3}$ for all points
  $u \in \scap(v, r_c / 3)$ and therefore
  $\prob_{V \sim q}(V \in \scap(v, r_c/3)) \geq \frac{4\tilde
    \epsilon}{3} \scapvol(r_c/3) \geq \gamma$.

\item {\it Separation by the set $S$:} For each class $i$, the set $S$
  contains the annulus
  $\setc{v \in \sphere}{\rub{i}(0) - r_a \leq \ang(w_i, v) \leq
    \rub{i}(0)}$ which has width $r_a$. Any path from one $A_i$ to
  another $A_j$ must cross two such annuli, each of width $r_a$, so
  the length of the path crossing $S$ is at least $2r_a = r_c$.

\item {\it Low density inside $S$:} Finally, let $v$ be any point in
  the set $S$ and let $u \in \scap(v, r_a)$. For any class $i$, the
  reverse triangle inequality gives that
  $\ang(w_i, v) \geq \ang(v, w_i) - \ang(u, w_i) \geq \rub{i}(0) -
  2r_a \geq \rub{i}(\frac{\tilde \epsilon}{4})$. Since this is true
  for all classes $i$, we have $q(v) \leq \frac{\tilde \epsilon}{4}$
  and therefore
  $\prob_{V \sim q}(V \in \scap(v, r_a)) \leq \frac{\tilde
    \epsilon}{4} \scapvol(r_a) \leq \tau - \gamma$.

\end{enumerate}

It follows that the sets $A_1$, \dots, $A_L$ are
$(r_c, r_a, \tau, \gamma)$-clusterable and it only remains to find an
activation radius $r_a$ that satisfies the above inequalities. Since
the robust linkage algorithm needs to estimate the probability mass of
balls to within error
$\gamma = \frac{\tilde \epsilon \scapvol(2r_a/3)}{4}$, we want this
activation radius to be not too small.

Taking the first order Taylor expansion of the $\rlb{i}$ and $\rub{i}$
functions, we have:
\begin{align*}
\rlb{i}(\lambda) &= \arccos(b_i) - \frac{b_i}{\sqrt{1 - b_i^2}} \frac{1}{\clb d v_d} \lambda + O(\lambda^2) \\
\rub{i}(\lambda) &= \arccos(b_i) - \frac{b_i}{\sqrt{1 - b_i^2}} \frac{1}{\cub d v_d} \lambda + O(\lambda^2),
\end{align*}
as $\lambda \to 0$. Therefore, we have that
\[
\rlb{i}(3\tilde \epsilon/4) - \rub{i}(\epsilon) =
\frac{1}{4dv_d \cub}\cdot\frac{b_i}{\sqrt{1-b_i^2}}\epsilon + O(\epsilon^2)
\]
and
\[
\rub{i}(0) - \rub{i}(\tilde \epsilon / 4) =
\frac{\clb}{4dv_d\cub^2} \cdot \frac{b_i}{\sqrt{1-b_i^2}} \epsilon + O(\epsilon^2),
\]
which shows that it is sufficient to set
$r_a = \frac{3\clb}{20 dv_d\cub^2} \cdot \frac{b_i}{\sqrt{1-b_i^2}}
\epsilon + O(\epsilon^2) = \Omega(\frac{\clb}{\cub^2} \bmin \epsilon)$
as $\epsilon \to 0$ and it follows that
$n = O(\frac{1}{\gamma^2}(d + \ln \frac{1}{\delta}) = \tilde
O((\cub^4d/(\epsilon^2 \clb^2 \bmin^2))^d)$.

Finally, we show that the algorithm correctly recovers the labels of
the large clusters. For $n = \tilde O( L / \epsilon^2 )$, we have that
with probability at least $1-\delta$ the following holds
simultaneously for all $2^L$ subsets $I \subset [L]$:
$\bigl| \hat P(A_I) - P(A_I) \bigr| \leq \epsilon / 4$, where
$A_I = \bigcup_{i \in I} A_i$. Since all samples in $A_I$ are marked
as active (by Lemma~\ref{lem:robustlinkage}), this implies that all
but at most $\frac{\epsilon}{4}n$ of the active points will belong to
the $A_i$ sets. It follows that if the algorithm queries the labels of
the $L$ largest clusters, they will also contain all but
$\frac{\epsilon}{4}n$ active samples.

On the other hand, whenever we query the label of one of the $A_i$
sets, we know that we will correctly classify every test point
belonging to $A_i$, so the error of the resulting classifier is at
most the probability mass of $\{\qu \leq \epsilon\}$ together with the
probability mass of the $A_i$ sets for which we did not query the
label. Since the unqueried $A_i$ sets have empirical probability mass
at most $\epsilon / 4$ and we have uniform convergence for all unions
of $A_i$ sets to within error $\epsilon / 4$, it follows that the
total probability mass of the unlabeled $A_i$ sets is at most
$\epsilon / 2$ and it follows that the error of the resulting
classifier is at most $\epsilon$.
\end{proof}

There are two main differences between the sample complexity of
Theorem~\ref{thm:onevsall} and the results from
Section~\ref{sec:hamdp1}. First, the unlabeled sample complexity now
has an $\epsilon^{-2d}$ dependence, rather than only
$\epsilon^{-2}$. This is because the distance between the connected
components of $\{p \geq \epsilon\}$ goes to zero (in the worst case)
as $\epsilon \to 0$, so our algorithm must be able to detect
low-density regions of small width. In contrast,
Lemma~\ref{lem:separation} allowed us to establish a non-diminishing
gap $g > 0$ between the classes when the codewords were well
separated. On the other hand, the label complexity in this setting is
better, scaling with $L$ instead of $N$, since we are able to
establish that each class will have one very large cluster containing
nearly all of its data.

Theorem~\ref{thm:onevsallAgnostic} in the appendix gives an analysis
of Algorithm~\ref{alg:ova} in the agnostic setting of
Section~\ref{sec:agnostic}.

\section{The Boundary Features Condition}
\label{sec:boundary_features}

\begin{figure}[b]
  \centering
  \includegraphics[width=0.4\columnwidth]{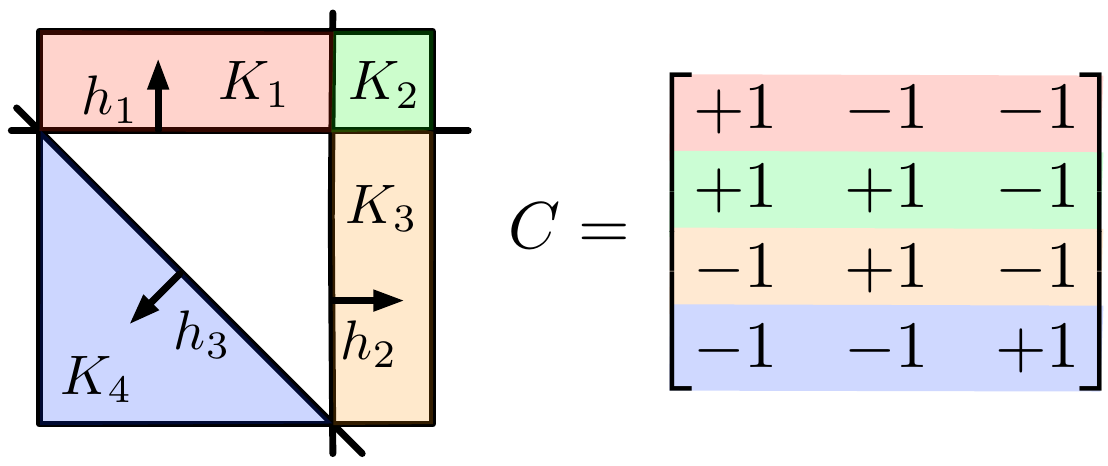}
  \caption{An example of the boundary features problem. The arrows
    indicate the positive side of the linear functions. \vspace{-0.4cm}}
  \label{fig:boundaryfeatures}
\end{figure}

Finally, in this section we introduce a novel condition on the code
matrix called the boundary features condition that captures the
intuition that every binary classification task should be
significant. Assumption~\ref{assm:boundaryfeatures} formalizes this
intuition.

\begin{assm}
  \label{assm:boundaryfeatures}
  There exists a code matrix $C \in \{\pm 1\}^{L \times m}$, linear
  functions $h_1$, \dots, $h_m$, and a scale parameter $R > 0$ so
  that: (1) for any point $x$ in class $y$, we have $h(x) = C_y$; (2)
  for each $h_j$, there exists a class $i$ such that negating the
  $j^{\rm th}$ entry of $C_i$ produces a codeword $C_i'$ not in $C$
  and there exists a point $x$ on the hyperplane $h_j = 0$ such that
  every point in $B(x,R)$ has either code word $C_i$ or $C_i'$; and
  (3) any pair of points $x, x' \in \X$ such that $h(x)$ and $h(x')$
  are not codewords in $C$ and $h(x) \neq h(x')$ must have
  $\norm{x - x'} \geq R$.
\end{assm}

Part~(1) of this assumption requires that the output code classifier
is consistent, part~(2) is a condition that guarantees every linear
separator $h_j$ separates at least one class $i$ from a region of
space that does not belong to any class, and part~(3) requires that
points with codewords not in the code matrix must either have the same
codeword or be separated by distance $R$. Part~(3) allows us to
simplify our algorithm and analysis and is trivially satisfied in
cases where all points in $\X$ that do not belong to any class have
the same codeword, as is the case for one-vs-all classification and
the problem in Figure~\ref{fig:boundaryfeatures}.

Problems in this setting are more challenging than those of the
previous sections because they may not be amenable to clustering-based
learning strategies. Whenever the Hamming distance between a pair of
codewords is only 1, this implies that one of the linear separators
$h_j$ forms a shared boundary between the classes, and therefore these
classes may be connected by a large and high-density region.  Instead,
Assumption~\ref{assm:boundaryfeatures} guarantees that for every
linear separator $h_j$, there is some ball $B(x,R)$ centered on $h_j$
that is half-contained in the set of points belonging to some class
$i$ and the other half belongs to the set of points that do not belong
to any class. Therefore, by looking for hyperplanes that locally
separate sample data from empty regions of space, we can recover the
linear separator $h_j$ from the local \emph{absence} of data. Define
$K_i = \setc{x \in \X}{h(x) = C_i}$ to be the set of points that
belong to class $i$ and $K = \bigcup_{i=1}^L K_i$. Under the condition
that the density $p$ is supported on $K$ and is upper and lower
bounded, we exploit this structure in an algorithm that directly
learns the linear separators $h_1$, \dots, $h_m$.

Our hyperplane detection algorithm works by searching for balls of
radius $r$ whose centers are sample points such that one half of the
ball contains very few samples. If a half-ball contains very few
sample points then it must be mostly disjoint from the set $\K$. But
since its center point belongs to the set $\K$, this means that the
hyperplane defining the half-ball is a good approximation to at least
one of the true hyperplanes. See
Figure~\ref{fig:qualifying_half_balls} for examples of half-balls that
would pass and fail this test. The collection $H$ of hyperplanes
produced in this way partition the space into cells. Our algorithm
queries the labels of the cells containing the most sample points and
classifies test points based on the label of their cell in the
partition (and if the label is unknown, we output a random
label). Pseudocode is given in Algorithm~\ref{alg:edgedetection} using
the following notation: for any center $x \in \X$, radius $r \geq 0$,
and direction $w \in \sphere$, let
$\halfball(x, r, w) = \setc{y \in \ball(x,r)}{w^\tp(y - x) > 0}$ and
define $\hbp(r) = \frac{1}{2} \clb r^d v_d$.

\begin{figure}
  \begin{minipage}{0.5\textwidth}
    \begin{framed}
      \hangindent=0.7cm {\bf Input:} Sample
      $\sampleset = \{x_1, \dots, x_n\}$, $r > 0$, $\tau > 0$.
      \begin{enumerate}[noitemsep,nolistsep,leftmargin=*]
      \item Initialize set of candidate hyperplanes $H = \emptyset$.
      \item For all samples $\hat x \in \sampleset$ with $\ball(\hat x, r) \subset \X$:
        \begin{enumerate}[noitemsep,nolistsep,leftmargin=*]
        \item Let $\hat w = \argmin_{w \in \sphere} |\halfball(\hat x, r, w) \cap \sampleset|$.
        \item If
          $|\halfball(\hat x, r, \hat w) \cap \sampleset| / n < \tau$, add
          $(\hat x, \hat w)$ to $H$.
        \end{enumerate}
      \item Let $\{\hat C_i\}_{i=1}^N$ be the partitioning of $\X$ induced
        by $H$.
      \item Query the label of the $L$ cells with the most samples.
      \item Output $\hat f(x)$ = label of $C_i$ containing $x$.
      \end{enumerate}
    \end{framed}
    
    \renewcommand{\figurename}{Algorithm}
    \setcounter{figure}{3}
    \captionof{figure}{Plane-detection algorithm.}
    \label{alg:edgedetection}
    \setcounter{figure}{2}
    \renewcommand{\figurename}{Figure}
  \end{minipage}
  \begin{minipage}{0.5\textwidth}
    \centering
    \vspace{2em}
    \includegraphics[width=0.55\columnwidth]{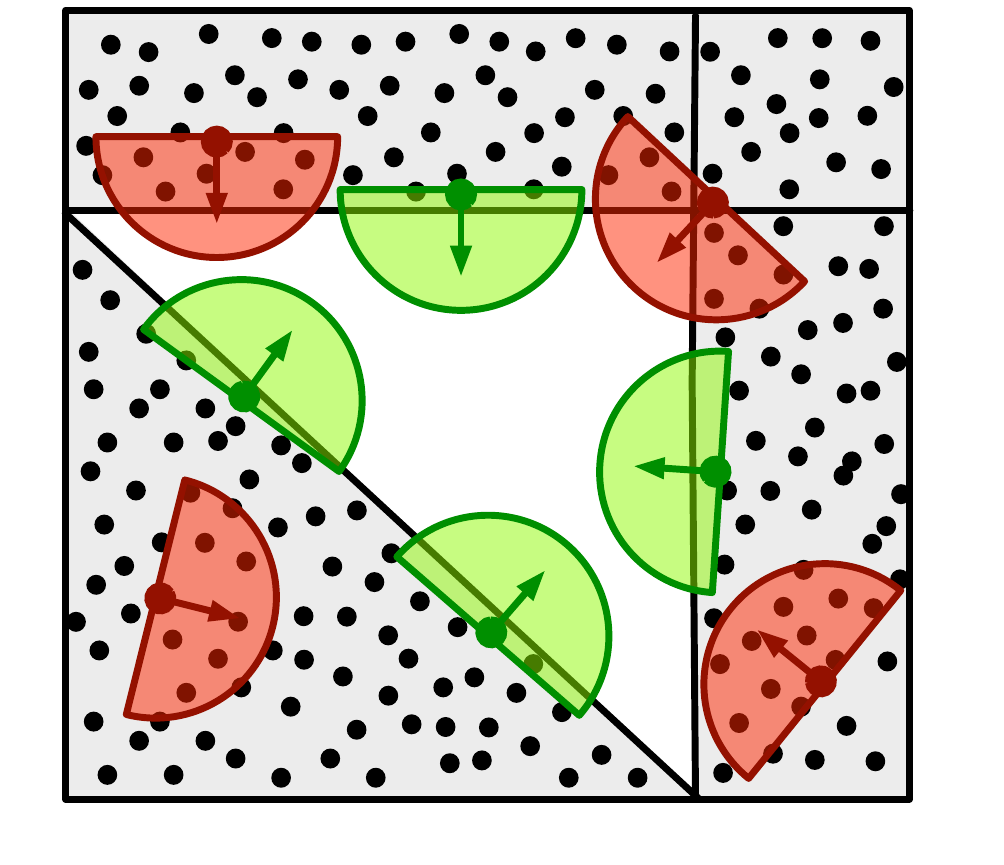}
    \captionof{figure}{Examples of half-balls that would be included (green) or excluded
      (red) by the plane detection algorithm.}
    \label{fig:qualifying_half_balls}
  \end{minipage}
\end{figure}

Each candidate hyperplane produced by
Algorithm~\ref{alg:edgedetection} is associated with a half-ball that
caused it to be included in $H$. In fact, we can think of the pairs
$(\hat x, \hat w)$ in $H$ as either encoding the linear function
$\hat h(x) = w^\tp (x - \hat x)$ or the half-ball
$\halfball(\hat x, r, \hat w)$, where $r$ is the scale parameter of
the algorithm. Most of our arguments will deal with the half-balls
directly, so we adopt the second interpretation. The analysis of
Algorithm~\ref{alg:edgedetection} has two main steps. First, we show
that the face of every half-ball in the set $H$ is a good
approximation to at least one of the true hyperplanes, and that every
true hyperplane is well approximated by the face of at least one
half-ball in $H$.  Second, using the fact that the half-balls in $H$
are good approximations to the true hyperplanes, we argue that the
output classifier will only be inconsistent with the true
classification rule in a small margin around each of the true linear
separators. Then the error of the classification rule is easily
bounded by bounding the probability mass of these margins.

To measure the approximation quality, we say that the half-ball
$\halfball = \halfball(\hat x, r, \hat w)$ is an
$\alpha$-approximation to the linear function $h$ if
$\prob_{X \sim \halfball}(\sign(h(X)) = \sign(h(\hat x))) \leq
\alpha$, where $\prob_{X \sim \halfball}$ denotes the probability when
$X$ is sampled uniformly from the half-ball $\halfball$. The
motivation for this definition is as follows: given any point
$\hat x \in \X$, the half-ball $\halfball(\hat x, r, \hat w)$ will be
an $\alpha$-approximation to $h_i$ only if $\hat x$ is on one side of
the decision surface of $h_i$ and all but an $\alpha$-fraction of the
half-ball's volume is on the other side. Intuitively, this means that
the face of the half-ball must approximate the decision surface of the
function $h_i$.

The following Lemma shows that when Algorithm~\ref{alg:edgedetection}
is run with appropriate parameters and on a large enough sample drawn
from the data distribution, then with high probability the algorithm
will include at least one half-ball in $H$ $\alpha$-approximating each
true hyperplane $h_i$ and every half-ball in $H$ will be an
$\alpha$-approximation to at least one true hyperplane. Recall that
$\hbp(r) = \frac{1}{2} \clb r^d v_d$ is a lower bound on the
probability mass of a half-ball of radius $r$ contained in the set
$K$.

\begin{restatable}{lem}{HyperplaneApproximationsLemma}
  \label{lem:hyperplaneapproximations}
  Fix any $\alpha > 0$ and confidence parameter $\delta > 0$. Let $H$
  be the set of half-balls produced by
  Algorithm~\ref{alg:edgedetection} when run with parameters $r = R/2$
  and $\tau = \frac{1}{2}\alpha \hbp(r)$ on a sample of size
  $n = O(\frac{1}{\gamma^2}(\ln^2 \frac{d}{\gamma} + \ln
  \frac{1}{\delta}))$ where
  $\gamma = \frac{2}{5}\tau = \frac{1}{5}\alpha \hbp(r)$. Then with
  probability at least $1-\delta$, every half-ball in $H$ will be an
  $\alpha$-approximation to at least one true hyperplane $h_i$, and
  every true hyperplane $h_i$ will be $\alpha$-approximated by at
  least one half-ball in $H$.
\end{restatable}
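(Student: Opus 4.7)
The plan is to first establish a uniform convergence guarantee for half-balls, and then prove each direction of the lemma using the density bounds together with Assumption~\ref{assm:boundaryfeatures}. The class of half-balls in $\mathbb{R}^d$ is parameterized by center, radius, and normal direction, hence has VC-dimension $O(d)$, and a standard VC bound shows that for the stated sample size $n$, with probability at least $1-\delta/2$ we have $\sup_{\halfball} |\hat P(\halfball) - P(\halfball)| \leq \gamma$. I condition on this event throughout the rest of the argument.

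For direction one, suppose $(\hat x, \hat w) \in H$, and write $\halfball = \halfball(\hat x, r, \hat w)$. By construction $\hat P(\halfball) < \tau$, so uniform convergence gives $P(\halfball) < \tau + \gamma = \tfrac{7}{10}\alpha\hbp(r)$. Since $p \geq \clb$ on $K$ and $p \equiv 0$ off $K$, this translates to $\vol(\halfball \cap K) < \tfrac{7}{10}\alpha\vol(\halfball)$, so $\halfball \setminus K$ has strictly positive volume. Now the key step uses Assumption~\ref{assm:boundaryfeatures}(3): any two points of $\halfball \setminus K$ lie in $B(\hat x, r)$ and hence at distance at most $2r = R$ from each other, so (up to an ignorable measure-zero set of exactly-antipodal pairs) they must share a common codeword $c^* \notin C$. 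Since $c^* \neq C_{\hat y}$, they differ in at least one coordinate $i$, and every point of $\halfball \setminus K$ lies on the opposite side of $h_i$ from $\hat x$. Taking complements inside $\halfball$ yields $\{y \in \halfball : \sign(h_i(y)) = \sign(h_i(\hat x))\} \subseteq \halfball \cap K$, so the volume of this ``same-side'' set is at most $\tfrac{7}{10}\alpha\vol(\halfball) < \alpha\vol(\halfball)$, which is exactly the $\alpha$-approximation condition for $h_i$.

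For direction two, fix a hyperplane $h_j$. Assumption~\ref{assm:boundaryfeatures}(2) supplies a class $i$ and a point $x^*$ on $h_j$ such that every point of $B(x^*, R)$ has codeword $C_i$ or $C_i' \notin C$. Consider the thin slab $B(x^*, R/4) \cap K_i \cap \{y : \dist(y, h_j) \leq \epsilon'\}$; its mass is $\Omega(\clb \epsilon' v_{d-1}(R/4)^{d-1})$. Choosing $\epsilon' = \Theta(\alpha\clb r / \cub)$ small enough, the opposite half-ball at any point in this slab has true mass at most $\cub \cdot \epsilon' \cdot v_{d-1} r^{d-1} < \tau - \gamma$. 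A union bound over the $m$ hyperplanes shows that for the stated $n$, with probability at least $1-\delta/2$ each such slab contains a sample $\hat x$. For this $\hat x$, the half-ball aimed away from $\hat x$'s side of $h_j$ has empirical mass below $\tau$ by uniform convergence, so the algorithm's $\argmin$ step produces some $\hat w$ whose half-ball also has empirical mass below $\tau$, adding $(\hat x, \hat w)$ to $H$. Finally, applying the direction-one argument inside $B(\hat x, r) \subset B(x^*, R)$, the only non-$K$ codeword available there is $c^* = C_i'$, which differs from $C_i$ only in coordinate $j$; hence the Hamming-difference set $I = \{j\}$ is a singleton and the chosen half-ball is forced to $\alpha$-approximate $h_j$ specifically.

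The main obstacle is direction one: showing that a low-mass half-ball approximates a single specific hyperplane, rather than just ``the boundary of $K$'' collectively. A priori the non-$K$ region could be carved out by several hyperplanes in concert, with no individual one aligned with the face of $\halfball$. Assumption~\ref{assm:boundaryfeatures}(3) is precisely the structural piece that resolves this: it collapses the non-$K$ portion of any ball of radius $R/2$ to a single convex cell with a single codeword $c^*$, so each coordinate in which $c^*$ differs from $C_{\hat y}$ automatically witnesses the $\alpha$-approximation for the corresponding hyperplane $h_i$.
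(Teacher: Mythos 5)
Your proof follows essentially the same approach as the paper: a VC-style uniform-convergence bound for the half-ball family, then (i) any accepted half-ball has small true mass, hence its complement-in-$K$ has positive volume, and by Assumption~\ref{assm:boundaryfeatures}(3) that whole region shares one codeword $c^* \notin C$, forcing the half-ball to $\alpha$-approximate the $h_i$ on which $c^*$ disagrees with the codeword of $\hat x$; and (ii) Assumption~\ref{assm:boundaryfeatures}(2) supplies, near each $h_j$, a thin slab from which any sample point yields an accepted half-ball that can only approximate $h_j$. Your direction-one argument is in fact a little cleaner than the paper's: you run a single direct calculation, whereas the paper argues by contradiction with a case split on whether $\halfball \subset K$.

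The one place your bookkeeping diverges is direction two. You fix the slab width $\epsilon'$ a priori so that the aimed-away half-ball has true mass below $\tau - \gamma$, and then rely on a union bound over the $m$ slabs to get a sample. But the resulting slab mass scales like $\gamma \cdot \clb/(\cub \, 2^{\Theta(d)})$, which can be far below $\gamma$; the stated $n = \tilde O(1/\gamma^2)$ does not obviously put a sample in a set that small. The paper avoids this by choosing the slab width $\rho$ \emph{adaptively} so that the slice has probability mass exactly $\tau - \gamma > \gamma$, and then invoking uniform convergence (over balls intersected with two half-spaces, VC dimension $O(d\log d)$) to guarantee a sample lands in the slice. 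To be fair, the paper's own version then quietly uses the same width to bound the aimed-away half-ball's mass by $\tau - \gamma$, which really costs a $\cub/\clb$ factor because the slice mass lower bound uses $\clb$ while the half-ball mass upper bound uses $\cub$; so both your version and the paper's require a constant-factor repair. The structural idea and the role of both parts of Assumption~\ref{assm:boundaryfeatures} are identical.
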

\begin{proof}
  Since the VC-dimension of both balls and half-spaces in $\reals^d$
  is $d+1$, the VC-dimension of the set of intersections of balls and
  up to two half-spaces is $O(d \ln d)$. Therefore, by a standard
  VC-bound~\citep{VC}, if we see an iid sample $\sampleset$ of size
  $n = O(\frac{1}{\gamma^2} (\ln^2 \frac{d}{\gamma} + \ln
  \frac{1}{\delta}))$, then with probability at least $1-\delta$ the
  empirical measure of any ball intersected with up to two half-spaces
  will be within $\gamma$ of its true probability mass. In other
  words, the fraction of the sample set $\sampleset$ that lands in any
  ball intersected with up to two half-spaces will be within $\gamma$
  of the probability that a sample $X$ drawn from $\datadist$ will
  land in the same set. For the remainder of the proof, assume that
  this high-probabilty event holds.

  First, we show that every half-ball in the set $H$ is an
  $\alpha$-approximation to at least one true hyperplane. Suppose
  otherwise, then there is a half-ball
  $\halfball = \halfball(\hat x, r, \hat w)$ with
  $(\hat x, \hat w) \in H$ that is not an $\alpha$ approximation to
  any true hyperplane $h_i$. The center $\hat x$ of the half-ball must
  belong to the positive region $\K$, since it is one of the sample
  points. If the half-ball $\halfball$ is contained entirely in the
  set $\K$, then the probability that a new sample $X$ drawn from
  $\datadist$ will land in the half-ball $\halfball$ is $\hbp(r)$ and
  therefore the fraction of samples that landed in the half-ball is at
  least $\hbp(R/2) - \gamma$. But since
  $\hbp(r) - \gamma \geq \frac{4}{5}\alpha\hbp(r) > \tau$, this
  contradicts the half-ball being included in the set $H$. Otherwise,
  the half-ball contains at least one point $y$ that does not belong
  to the set $\K$ (i.e., it does not belong to any class). Since
  $\hat x$ is in the set $\K$, there is at least one true hyperplane
  $h_i$ that separates $\hat x$ from $y$. Since $r = R/2 < R$, every
  other point $y'$ in the half-ball that does not belong to any class
  must have the same code word as $y$ (since, by assumption, points
  outside of $K$ that do not belong to any class must have the same
  code word when they are closer than $R$), and therefore must be on
  the same side of $h_i$ as $y$. It follows that all points in the
  half-ball on the same side of $h_i$ as $\hat x$ (i.e., those points
  for which the sign of $h_i$ matches the sign of $h_i(\hat x)$)
  belong to the set $\K$. But, since the half-ball is not an
  $\alpha$-approximation to $h_i$, this implies that at least an
  $\alpha$ fraction of the half-ball's volume must belong to the set
  $\K$. Therefore, the probability that a new sample $x$ drawn from
  the data distribution $p$ belongs to the half-ball can be lower
  bounded as follows:
  \[
    \prob_{x \sim p}(x \in \halfball)
    \geq \clb \vol(\halfball \cap \K) 
    = \clb \vol(\halfball)\frac{\vol(\halfball \cap K)}{\vol(\halfball)} 
    \geq \alpha 
    \hbp(r).
  \]
  By the uniform convergence argument, the fraction of the samples in
  $\sampleset$ contained in the half-ball $\halfball$ is at least
  $\alpha \hbp(r) - \gamma > \tau$, which contradicts the half-ball
  being in $H$. In either case we arrived at a contradiction and it
  follows that every half-ball in $H$ is an $\alpha$-approximation to
  at least one true hyperplane $h_i$.

  Finally, we show that the set $H$ will contain at least one
  half-ball that is an $\alpha$-approximation to each true hyperplane
  $h_i$. Fix any true hyperplane $h_i$. By assumption, there is a
  class $\ell$ and a point $x_0$ on the decision surface of $h_i$ so
  that one half-ball of $\ball(x_0, R)$ with face $h_i$ is is
  contained in $\K_\ell \subset \K$ and the other half-ball is
  disjoint from $\K$.  Suppose WLOG that the half-ball on the negative
  side of $h_i$ is contained in $\K$ (the case when the half-ball on
  the positive side is contained in $\K$ is identical). Define
  $\rho > 0$ to be the width such that the probability that a new
  sample $X$ from $\datadist$ lands in the slice of the ball
  $S = \setc{x \in \ball(\hat x_0, r)}{h_i(x) \in [-\rho,0]}$ is equal
  to $\tau - \gamma$. Note that, since the half-ball on the negative
  side of $h_i$ is a subset of $K$ and
  $\tau - \gamma = \frac{3}{10}\alpha \hbp(r) < \hbp(r)$, such a value
  of $\rho$ always exists. Since $\tau - \gamma > \gamma$, the uniform
  convergence argument guarantees that there will be at least one
  sample point in the slice, say $\hat x \in \sampleset$. Since
  $\hat x$ is within distance $r = R/2$ of the point $x_0$, the ball
  $\ball(\hat x, r)$ is contained in $\ball(x_0, R)$. Therefore, the
  ball of radius $r$ centered at $\hat x$ only contains points that
  either belong to class $\ell$ or no class, since only the linear
  separator $h_i$ passes through this ball. By construction, the
  half-ball $\halfball(\hat x, r, w_i)$ (where $w_i$ is the
  coefficient vector defining $h_i(x) = w_i^\tp x - b_i$) with face
  parallel to $h_i$ intersects the set $\K$ in a slice of width at
  most $\rho$ and therefore has probability mass at at most
  $\tau - \gamma$. It follows that the direction $\hat w$ that
  minimizes the number of samples in the half-ball
  $\halfball(\hat x, r, \hat w)$ will result in the half-ball
  containing at most a $\tau$ fraction of the sample set, and
  therefore the pair $(\hat x, \hat w)$ will be included in $H$, and
  this will be an $\alpha$-approximation to $h_i$.
\end{proof}

Naturally, if a half-ball $\halfball(\hat x, r, \hat w)$ is an
$\alpha$-approximation to the linear function $h$, we expect that the
decision surface of $\hat h(x) = \hat w^\tp(x - \hat x)$ is similar to
the decision surface of $h$. In turn, this suggests that either
$\hat h(x)$ or $-\hat h(x)$ should take similar function values to
$h(x)$ (since the coefficient vectors are normalized). We first give a
simple probability lemma that bounds the fraction of a ball contained
between two parallel hyperplanes, one passing through the ball's
center. The proof of Lemma~\ref{lem:ballslice} is in
Section~\ref{apdx:boundaryfeatures} of the appendix.

\begin{restatable}{lem}{lemBallSlice}
  \label{lem:ballslice}
  Let $r > 0$ be any radius and $X$ be a random sample drawn uniformly
  from the ball of radius $r$ centered at the origin. For any width
  $0 \leq \rho \leq r / \sqrt{2}$, the probability that the first
  coordinate of $X$ lands in $[0,\rho]$ can be bounded as follows:
  \[
  \sqrt{\frac{d}{2^d \pi}} \frac{\rho}{r}
  \leq \prob_{X \sim B(r,0)}(X_1 \in [0,\rho])
  \leq \sqrt{\frac{d+1}{2\pi}} \frac{\rho}{r}.
  \]
\end{restatable}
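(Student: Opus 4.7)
The plan is to reduce the statement to a one-dimensional integral by computing the marginal density of $X_1$, then bound that integral above and below using an elementary inequality and a standard ratio-of-Gamma-functions estimate.

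First I would compute the density of $X_1$ exactly. Since the slice of $B(0,r) \subset \reals^d$ at height $X_1 = x$ is a $(d-1)$-dimensional ball of radius $\sqrt{r^2 - x^2}$, the density of $X_1$ is
\[
f_{X_1}(x) = \frac{v_{d-1}}{v_d\, r^d} (r^2 - x^2)^{(d-1)/2}
\qquad \text{for } |x| \le r,
\]
where $v_k$ is the volume of the unit ball in $\reals^k$. Hence
\[
\prob(X_1 \in [0,\rho]) = \frac{v_{d-1}}{v_d\, r^d} \int_0^\rho (r^2 - x^2)^{(d-1)/2}\, dx.
\]

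Next I would handle the integral with two crude but tight inequalities. For the upper bound, bound the integrand by its value at $x=0$, namely $r^{d-1}$, which gives $\int_0^\rho (r^2 - x^2)^{(d-1)/2}\, dx \le \rho\, r^{d-1}$ and hence $\prob(X_1 \in [0,\rho]) \le (v_{d-1}/v_d)(\rho/r)$. For the lower bound, I use the hypothesis $\rho \le r/\sqrt{2}$: for every $x \in [0,\rho]$ we have $x^2 \le r^2/2$, so $(r^2-x^2)^{(d-1)/2} \ge (r^2/2)^{(d-1)/2} = r^{d-1}/2^{(d-1)/2}$, giving $\prob(X_1 \in [0,\rho]) \ge 2^{-(d-1)/2}(v_{d-1}/v_d)(\rho/r)$.

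Finally I would bound the ratio $v_{d-1}/v_d$. Using $v_k = \pi^{k/2}/\Gamma(k/2+1)$ gives $v_{d-1}/v_d = \pi^{-1/2}\, \Gamma(d/2+1)/\Gamma((d+1)/2)$. The two-sided Gautschi-type inequality
\[
\sqrt{d/2}\ \le\ \frac{\Gamma(d/2+1)}{\Gamma((d+1)/2)}\ \le\ \sqrt{(d+1)/2}
\]
then yields $\sqrt{d/(2\pi)} \le v_{d-1}/v_d \le \sqrt{(d+1)/(2\pi)}$. Plugging the upper estimate into the upper bound gives $\sqrt{(d+1)/(2\pi)}\,\rho/r$ as required, and plugging the lower estimate into the lower bound gives $2^{-(d-1)/2}\sqrt{d/(2\pi)}\,\rho/r = \sqrt{d/(2^d\pi)}\,\rho/r$, matching the claim exactly.

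The only nontrivial step is the Gamma-ratio bound, which is a standard inequality but is where the exact constants $\sqrt{d/(2^d\pi)}$ and $\sqrt{(d+1)/(2\pi)}$ come from; the integral estimates are then essentially one line each. I expect no substantive obstacle beyond quoting the Gamma-ratio inequality correctly.
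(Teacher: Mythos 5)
Your proof is correct and follows essentially the same route as the paper's: compute the $x_1$-slice volume (equivalently the marginal density of $X_1$), bound the integrand $(r^2-x^2)^{(d-1)/2}$ above by $r^{d-1}$ and below by $(r^2/2)^{(d-1)/2}$ using $\rho \le r/\sqrt{2}$, and finish with the two-sided bound $\sqrt{d/(2\pi)} \le v_{d-1}/v_d \le \sqrt{(d+1)/(2\pi)}$. The only cosmetic difference is that you spell out where the volume-ratio estimate comes from (Gamma functions and a Gautschi-type inequality), whereas the paper simply cites the interval containment $v_{d-1}/v_d \in [\sqrt{d/(2\pi)},\sqrt{(d+1)/(2\pi)}]$ as a known fact.
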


Using Lemma~\ref{lem:ballslice}, we show the following:
\begin{restatable}{lem}{lemProbToFunc}
  \label{lem:probtofunc}
  Let the half-ball $\halfball(\hat x, r, \hat w)$ be an
  $\alpha$-approximation to the linear function $h(x) = w^\tp x - b$
  with $\norm{w} = 1$, $\hat x \in \X$, and $\alpha <
  \frac{1}{2}$. Let $D$ be the diameter of $\X$. If $h(\hat x) < 0$
  then for all $x \in \X$ we have
  \[
  |h(x) - \hat h(x)| \leq \biggl(2D + \sqrt{\frac{2^d \pi}{d}} \frac{r}{2}\biggr) \sqrt{\alpha},
  \]
  where $\hat h(x) = \hat w^\tp (x - \hat x)$. Otherwise, if
  $h(\hat x) > 0$ then the same upper bound holds for
  $|h(x) + \hat h(x)|$.
\end{restatable}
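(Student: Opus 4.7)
The plan is to reduce the target inequality to two separate scalar bounds via the algebraic identity
$h(x) - \hat h(x) = (w - \hat w)^\tp (x - \hat x) + h(\hat x)$,
which follows by expanding $\hat h(x) = \hat w^\tp(x - \hat x)$ and $h(x) = w^\tp x - b$ and using $h(\hat x) = w^\tp \hat x - b$ to collect terms. The triangle inequality together with $\norm{x - \hat x} \leq D$ then reduces the claim to proving $\norm{w - \hat w} \leq 2\sqrt\alpha$ and $|h(\hat x)| \leq \sqrt{2^d\pi/d}\, r \sqrt\alpha / 2$.

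For the first bound I would translate so $\hat x = 0$ and exploit the $\alpha$-approximation condition, which asserts that the bad region $B = \halfball \cap \{h < 0\}$ has volume at most $\alpha V$, where $V = \vol(\halfball)$. Since $h(0) < 0$ forces $b > 0$, the wedge $K = \halfball \cap \{w^\tp y \leq 0\}$ is contained in $B$ (because $w^\tp y \leq 0 < b$ gives $h(y) < 0$). Decomposing $\reals^d$ as the $2$-plane spanned by $w$ and $\hat w$ plus its $(d-2)$-dimensional orthogonal complement and applying Fubini, one passes to polar coordinates in the $2$-plane and sees that $K$ corresponds to a polar sector of angular measure $\theta = \ang(w, \hat w)$, while $\halfball$ corresponds to the half-disc of angular measure $\pi$ with the same radial-times-orthogonal integrand; hence $\vol(K)/V = \theta/\pi$. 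This yields $\theta \leq \pi\alpha$, so $\norm{w - \hat w} = 2\sin(\theta/2) \leq 2\sin(\pi\alpha/2) \leq 2\sqrt\alpha$, using the elementary inequality $1 - \cos(\pi\alpha) \leq 2\alpha$ on $[0, 1/2]$ (which follows from concavity since both sides agree at both endpoints).

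For the second bound I would use the \emph{complementary} decomposition, which is cleaner than lower-bounding $\vol(B)$ directly. The good region $\halfball \cap \{h > 0\}$ has volume at least $(1 - \alpha) V$ and is a subset of the spherical cap $\ball(\hat x, r) \cap \{w^\tp(y - \hat x) > \rho\}$, where $\rho = |h(\hat x)|$. This cap has volume $V - V_{\mathrm{slab}}$ with $V_{\mathrm{slab}} = \vol(\ball(\hat x, r) \cap \{0 \leq w^\tp(y - \hat x) \leq \rho\})$, so the subset relation gives $V_{\mathrm{slab}} \leq \alpha V$. A short separate volumetric argument shows $\alpha < 1/2$ forces $\rho < r/\sqrt 2$, so Lemma~\ref{lem:ballslice} applies and gives $V_{\mathrm{slab}} \geq \sqrt{d/(2^d\pi)}\, \rho\, r^{d-1} v_d$. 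Combining with $V = r^d v_d / 2$ and $\alpha \leq \sqrt\alpha$ for $\alpha \leq 1$ yields the claimed bound on $\rho$.

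The case $h(\hat x) > 0$ reduces to $h(\hat x) < 0$ by replacing $h$ with $-h$: the sign-matching predicate in the definition of $\alpha$-approximation is invariant under $h \mapsto -h$, so $\halfball$ remains an $\alpha$-approximation to $-h$ and $-h(\hat x) < 0$; applying the first case and using $|(-h)(x) - \hat h(x)| = |h(x) + \hat h(x)|$ produces the stated bound with the sign flipped. The main technical obstacle is establishing the wedge volume identity $\vol(K)/V = \theta/\pi$ in arbitrary dimension, but the $2$-plane Fubini reduction handles this cleanly because the conditions defining $K$ and $\halfball$ depend only on the first two coordinates in a basis adapted to $w$ and $\hat w$.
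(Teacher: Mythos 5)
Your proof follows the paper's approach: you split the target into the direction-error term $\norm{w - \hat w}$ (bounded via the angle $\theta = \ang(w,\hat w) \leq \pi\alpha$, obtained from the same wedge-volume ratio the paper calls radial symmetry) and the offset term $|h(\hat x)|$ (bounded via Lemma~\ref{lem:ballslice} applied to a slab of the ball), then combine by the triangle inequality exactly as in the paper. Two small notes: the inequality $1-\cos(\pi\alpha)\leq 2\alpha$ on $[0,1/2]$ follows from the \emph{convexity} (not concavity) of the left-hand side there, since its second derivative is $\pi^2\cos(\pi\alpha)\geq 0$ on that interval and a convex function lies below the chord through its endpoints; and your observation that one must verify $\rho < r/\sqrt{2}$ before invoking Lemma~\ref{lem:ballslice} is a prerequisite the paper's own proof leaves implicit, and it does go through because $\prob_{X\sim \ball(0,r)}(X_1\in[0,r/\sqrt{2}])\geq 1/4$ for all $d\geq 1$, so $V_{\mathrm{slab}}\leq\alpha V < V/2$ forces $\rho<r/\sqrt{2}$.
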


\begin{proof}
  Suppose that $h(\hat x) < 0$ and let $X$ be a uniformly random
  sample from the half-ball
  $\halfball = \halfball(\hat x, r, \hat w)$. By assumption, we know
  that $\prob(h(X) < 0) \leq \alpha$.

  First we show that $\norm{w - \hat w}$ is small. Since
  $\alpha < 1/2$ we have that $w^\tp \hat w > 0$. To see this, notice
  that we must have $h(\hat x + r \hat w) \geq 0$, since otherwise at
  least half of the half-ball would be on the negative side of
  $h$. Define $g(x) = w^\tp(x - \hat x)$ to be the linear function
  whose decision surface runs parallel to that of $h$ but passes
  through the point $\hat x$. Since
  $h(x) = g(x) + h(\hat x) \leq g(x)$, we have that
  $\alpha > \prob(h(X) < 0) \geq \prob(g(X) < 0)$. Moreover, since the
  decision surface of $g$ passes through the center of the half-ball
  $\halfball$ and the uniform distribution on the half-ball is
  radially symmetric about the point $\hat x$, we have that
  $\prob(g(X) < 0) = \frac{\ang(w, \hat w)}{\pi}$.  It follows that
  $\ang(w, \hat w) \leq \pi \alpha$. Using this fact, we can bound
  $\norm{w - \hat w}$ as follows:
  \[ \norm{w - \hat w}^2 = \norm{w}^2 + \norm{\hat w}^2 - 2w^\tp \hat
    w = 2(1 - w^\tp \hat w). \] Since
  $w^\tp \hat w = \cos(\ang(w, \hat w))$ and on the interval
  $[0,\pi/2]$, the $\cos(\theta)$ function is decreasing and lower
  bounded by $1 - \frac{2}{\pi}\theta$, we have that
  $2(1-w^\tp \hat w) \leq 4\alpha$. Taking the square root gives that
  $\norm{w - \hat w} \leq 2\sqrt{\alpha}$.

  Next we show that $|h(\hat x)|$ (the distance from $\hat x$ to the
  decision surface of $h$) is not too large. The half-ball
  $\halfball(\hat x, r, w)$, whose direction $w$ matches the
  coefficient vector of $h$ is one half-ball centered at $\hat x$ of
  radius $r$ minimizing the fraction of its volume contained on the
  same side of $h$ as $\hat x$. This is because every point in the
  ball $\ball(\hat x, r)$ not on the same side as $\hat x$ is
  contained in $\halfball(\hat x, r, w)$. Let $Y$ be uniformly sampled
  from $\halfball(\hat x, r, w)$. By construction of the half-ball $Y$
  is sampled from, we have that
  $\prob(h(X) < 0) \geq \prob(h(Y) < 0)$, which gives
  \[
    \alpha
    \geq \prob_{X \sim \halfball(\hat x, r, \hat  w)}\bigl(h(X) < 0\bigr) 
    \geq \prob_{Y \sim \halfball(\hat x, r, w)} \bigl(h(Y) < 0\bigr) 
     \geq \sqrt{\frac{d}{2^d\pi}} \frac{2|h(\hat x)|}{r},
  \]
  which implies that
  \[
  |h(\hat x)|
  \leq \sqrt{\frac{2^d \pi}{d}} \frac{r \alpha}{2}.
  \]

  Finally, let $x'$ be any point on the decision surface of $h$, so
  that $h(x) = w^\tp(x - x')$. Combining the above calculations we
  have
  \begin{align*}
    |h(x) - \hat h(x)|
    &= |w^\tp (x - x') - \hat w^\tp(x - \hat x)| \\
    &= |w^\tp (x - \hat x) + w^\tp(\hat x - x') - \hat w^\tp(x - \hat x)| \\
    &= |(w - \hat w)(x - \hat x) + w^\tp(\hat x - x')| \\
    &\leq \norm{w - \hat w}\norm{x - \hat x} + |h(\hat x)| \\
    &\leq 2\sqrt{\alpha}D + \sqrt{\frac{2^d \pi}{d}} \frac{r\alpha}{2} \\
    &\leq (2D + \sqrt{\frac{2^d \pi}{d}} \frac{r}{2}) \sqrt{\alpha},
  \end{align*}
  as required. The proof of the case when $h(x) > 0$ follows by
  applying the above arguments to the function $-h$.
\end{proof}

Recall that for any hyperplane $h(x) = w^\tp x - b$ with
$\norm{w}_2 = 1$, the distance from point $x$ to the decision surface
of $h$ is $|h(x)|$. The above lemma implies that if
$\halfball(\hat x, r, \hat w)$ is an $\alpha$-approximation to $h$,
then either $\hat h$ or $-\hat h$ will have the same sign as $h$ for
all points in $\X$ except those in a margin of width
$O(\sqrt{\alpha})$ around $h$. Under the uniform distribution on $\K$,
the probability mass of the margins surrounding the true hyperplanes
isn't large, which results in low error for the classification rule.

\begin{restatable}{thm}{EdgeDetectionTheorem}
  \label{thm:edgedetection}
  Suppose Assumptions~\ref{assm:nearlyuniform} and
  \ref{assm:boundaryfeatures} hold. For any desired error
  $\epsilon > 0$, with probability at least $1-\delta$, running
  Algorithm~\ref{alg:edgedetection} with parameters $r \leq R/2$ and
  $\tau = \alpha\hbp(r)/2$ for a known constant $\alpha$ on on a
  sample of size
  $n = \tilde O( d m^2 \cub^2 R^d / (\clb^2 \epsilon^4) )$ will have
  error at most $\epsilon$.
\end{restatable}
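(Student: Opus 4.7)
The strategy is to combine Lemmas~\ref{lem:hyperplaneapproximations} and~\ref{lem:probtofunc} to show that, restricted to the complement of a thin margin around the true hyperplanes, the partition $\{\hat C_i\}$ produced by Algorithm~\ref{alg:edgedetection} coincides with the true class partition $\{K_1,\dots,K_L\}$; then bound the error by the mass of the margin together with the mass of unqueried cells.

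I would begin by choosing a target approximation parameter $\alpha>0$ (to be fixed at the end) and invoking Lemma~\ref{lem:hyperplaneapproximations}: with probability at least $1-\delta/2$, every half-ball $(\hat x,\hat w)\in H$ is an $\alpha$-approximation to some true hyperplane $h_i$ and every $h_i$ is $\alpha$-approximated by at least one half-ball in $H$. For each such pair, Lemma~\ref{lem:probtofunc} produces a sign $\sigma\in\{\pm 1\}$ with $|\hat w^\tp(x-\hat x)-\sigma h_i(x)|\leq B\sqrt{\alpha}$ for all $x\in\X$, where $B=2+\sqrt{2^d\pi/d}\cdot r/2$ (using $\mathrm{diam}(\X)=1$).

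Now define the margin $M=\{x\in\X:\min_i |h_i(x)|\leq B\sqrt{\alpha}\}$. For $x\notin M$, each $\sign(\hat w^\tp(x-\hat x))=\sigma\,\sign(h_i(x))$, so every hyperplane in $H$ approximating $h_i$ assigns the same side to $x$. Hence two points of $\X\setminus M$ share a cell of $\hat C$ iff they have identical sign vectors under $h_1,\dots,h_m$; combined with $\datadist$ being supported on $K$, this forces them to lie in the same class, so $\hat C$ has at most $L$ ``main'' cells outside $M$, one per class, each containing one set $K_\ell\setminus M$. The margin's mass is bounded by $P(M)\leq 2\cub m B v_{d-1}\sqrt{\alpha}$ because each of the $m$ true hyperplanes contributes a slab of width $2B\sqrt{\alpha}$ with $(d-1)$-volume at most $v_{d-1}$ inside the unit-diameter space. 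I would pick $\alpha$ so that $P(M)$ is a constant fraction of $\epsilon$; substituting into the sample complexity of Lemma~\ref{lem:hyperplaneapproximations} (with $r=R/2$) then yields the claimed $n$.

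For the last step I would apply a uniform convergence bound over the $O(n^d)$ cells of the arrangement of hyperplanes in $H$ to show that the $L$ empirically largest cells cover true mass at least $1-P(M)-O(\epsilon)$, since the $L$ main cells already account for $\geq 1-P(M)$; after querying and labeling these cells, the resulting classifier errs only on points in $M$ or in unqueried cells, both of which contribute $O(\epsilon)$ mass, so the total error is at most $\epsilon$. The main obstacle I anticipate is the third paragraph: $H$ may contain many near-parallel hyperplanes approximating the same $h_i$, possibly with opposite sign conventions from Lemma~\ref{lem:probtofunc}, and one must carefully argue that these redundant hyperplanes do \emph{not} subdivide $\X\setminus M$ into additional cells of $\hat C$ beyond the $L$ main cells.
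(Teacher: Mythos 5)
Your approach is essentially the paper's: invoke Lemma~\ref{lem:hyperplaneapproximations} to get that detected half-balls approximate the true hyperplanes and vice versa, use Lemma~\ref{lem:probtofunc} to convert this into closeness of linear functions up to sign, bound the error by the mass of an $O(\sqrt{\alpha})$-margin around the true hyperplanes via Lemma~\ref{lem:ballslice}, and set $\alpha$ so that margin has mass $O(\epsilon)$. The obstacle you flag at the end is not a genuine gap, but the sentence ``every hyperplane in $H$ approximating $h_i$ assigns the same side to $x$'' is indeed not what you need (it is false when the sign conventions $\sigma$ differ across half-balls approximating the same $h_i$); the correct and sufficient observation, which the paper makes explicitly, is that negating any $\hat h_j$ leaves the cell decomposition $\{\hat C_i\}$ unchanged. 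Concretely, for $x, x' \in \X \setminus M$ and any $(\hat x_j, \hat w_j) \in H$ approximating $h_i$, your bound gives $\sign(\hat h_j(x)) = \sigma_j \,\sign(h_i(x))$ and likewise at $x'$, so $\hat h_j$ separates $x$ from $x'$ iff $h_i$ does, regardless of $\sigma_j$. Hence redundant and oppositely-signed approximations of the same $h_i$ never create extra cells inside $\X \setminus M$, and the arrangement of $H$ restricted to $\X \setminus M$ coincides exactly with that of $h_1, \dots, h_m$; once you replace the intermediate claim by this separation-equivalence, your conclusion in paragraph three is correct and the rest goes through. (Incidentally, your explicit uniform-convergence step over cells to ensure the $L$ queried cells are the right ones is a point the paper's own proof leaves implicit, so that part of your write-up is if anything more careful than the original.)
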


\begin{proof}
  By Lemma~\ref{lem:hyperplaneapproximations}, for the parameter
  settings $\tau$ and $r$ and the given sample size, with probability
  at least $1-\delta$ every half-ball included in the set $H$ will be
  an $\alpha$-approximation to some true hyperplane $h_i$, and every
  true hyperplane $h_i$ is $\alpha$-approximated by at least one
  half-ball in $H$. Assume that this high probability event occurs.

  Let $H = \{(\hat x_1, \hat w_1), \dots, (\hat x_M, \hat w_M)\}$ be
  the set of of half-balls produced by the algorithm and define
  $\hat h_i(x) = \hat w_i^\tp (x - \hat x_i)$ for $i = 1, \dots, M$ to
  be the corresponding linear
  functions. Algorithm~\ref{alg:edgedetection} uses these hyperplanes
  to partition the space $\X$ into a collection of polygonal regions
  and assigns a unique class label to each cell in the
  partition. Notice that negating any of the $\hat h_i$ functions does
  not change the partitioning of the space.  Therefore, negating any
  subset of the $\hat h_i$ will not change the permutation-invariant
  error of the resulting classifier.

  Let $I_1$, \dots, $I_m$ be a partition of the set of indices
  $\{1, \dots, M\}$ such that for all $j \in I_i$, we have that
  $\halfball(\hat x_j, \hat w_j, r)$ is an $\alpha$-approximation to
  $h_i$. By Lemma~\ref{lem:probtofunc}, we know that for at least one
  $g \in \{\hat h_j, -\hat h_j\}$, we have that
  \[
  |h_i(x) - g(x)|
  \leq \left( 2D + \sqrt{\frac{2^d\pi}{d}} \frac{r}{2}\right) \sqrt{\alpha}
  \]  
  Since negating any of the functions $\hat h_j$ does not change the
  error of the resulting classifier, assume WLOG that the above holds
  for $g = \hat h_j$.

  This implies that whenever $|h_i(x)| > c\sqrt{\alpha}$, where
  $c = 2D + \sqrt{\frac{2^d\pi}{d} \frac{R}{4}}$, then for every
  $j \in I_i$, the sign of $\hat h_j(x)$ is the same as the sign of
  $h_i(x)$. It follows that for points that are not within a margin of
  $c\sqrt{\alpha}$ of any of the true hyperplanes, every $\hat h_j$
  function with $j \in I_i$ will have the same sign as $h_i$ for all
  $i = 1, \dots, m$. It follows that the classifier can only error on
  points that are within a $c \sqrt{\alpha}$ margin of one of the true
  hyperplanes.

  Using Lemma~\ref{lem:ballslice} we can bound the probability that a
  sample $X$ drawn uniformly from $K$ lands in the
  $c\sqrt{\alpha}$-margin of hyperplane $h_i$ as follows:
  \[
  \prob(\hbox{$X$ in $c\sqrt{\alpha}$-margin of $h_i$})
  \leq 2 \sqrt{\frac{d+1}{2\pi}} \frac{c \sqrt{\alpha}}{D} D^d v_d \cub,
  \]
  where $D$ is the diameter of $X$. We can make this upper bound equal
  to $\epsilon / m$ by setting
  \[
  \alpha
  = \frac{\pi}{2(d+1)} \left( \frac{\epsilon D}{m c D^d v_d \clb}\right)^2
  = \Omega\left(\frac{\epsilon^2}{m^2 2^d R^2 D^{2d} v_d^2 \clb^2}\right)
  \]
  Applying the union bound to the $m$ hyperplanes $h_1$, \dots, $h_m$
  shows that the error of $\hat f$ is at most $\epsilon$.
\end{proof}

Note that if the scale parameter $R$ is unknown, the conclusions of
Theorem~\ref{thm:edgedetection} continue to hold when the parameter
$r$ and the unlabeled sample complexity $n$ are set using a
conservatively small estimate $\hat R$ satisfying $\hat R \leq R$.

Theorem~\ref{thm:edgedetectionAgnostic} in the appendix extends the
above result to the agnostic setting considered in
Section~\ref{sec:agnostic}.

\section{Extensions to the Agnostic Setting}
\label{sec:agnostic}

The majority of our algorithms have two phases: first, we extract a
partitioning of the unlabeled data into groups that are likely
label-homogeneous, and second, we query the label of the largest
groups. We can extend our results for these algorithms to the agnostic
setting by querying multiple labels from each group and using the
majority label.

Specifically, suppose that the data is generated according to a
distribution $P$ over $\X \times [L]$ and there exists a labeling
function $f^*$ such that
$\prob_{(x,y) \sim P}(f^*(x) \neq y) \leq \eta$ and our assumptions
hold when the unlabeled data is drawn from the marginal $P_\X$ but the
labels are assigned by $f^*$. That is, the true distribution over
class labels disagrees with a function $f^*$ satisfying our
assumptions with probability at most $\eta$. In this setting, the
first phase of our algorithms, which deals with only unlabeled data,
behaves exactly as in the realizable setting. The only difference is
that we will need to query multiple labels from each group of data to
ensure that the majority label is the label predicted by
$f^*$. Suppose that the training data is $(x_1, y_1)$, \dots,
$(x_n, y_n)$ drawn from $P$ (where the labels $y_i$ are initially
unobserved). For $n = \tilde O(1 / \eta^2)$, we are guaranteed that on
at most $2\eta n$ of the training points we have that
$y_i \neq f^*(x_i)$. Moreover, if we only need to guess the label of
large groups of samples, say those containing at least $8\eta n$
points, then we are guaranteed that within each group at least $1/4$
of the sample points will have labels that agree with
$f^*$. Therefore, after querying $O(\log(1/\delta))$ labeled examples
from each group, the majority label will agree with $f^*$. If we use
these labels in the second phase of the algorithm, we would be
guaranteed that the error of our algorithm would be at most $\epsilon$
had the labels been produced by $f^*$, and therefore the error under
the distribution $P$ is at most $\eta + \epsilon$. The appendix
contains agnostic versions of Theorems~\ref{thm:hamDPO},
\ref{thm:onevsall}, and \ref{thm:edgedetection}.

Similarly, modifying Algorithm~\ref{alg:activehierarchicallinkage} to
require that the each cluster in the pruning have a majority label
that accounts for at least $3/4$ of the cluster's data can be used to
extend the corresponding results to the agnostic setting.

\section{Conclusion and Discussion}
\label{sec:conclusion}

In this work we showed how to exploit the implicit geometric
assumptions made by output code techniques under the well studied
cases of one-vs-all and well separated codewords, and for a novel
boundary features condition that captures the intuition that every
binary learning task should be significant. We provide label-efficient
learning algorithms for both the consistent and agnostic learning
settings with guarantees when the data density has thick level sets or
upper and lower bounds. In all cases, our algorithms show that the
implicit assumptions of output code learning can be used to learn from
very limited labeled data.

In this work we focused on linear output codes, which have been in
several practical works. For example \citet{zero-shotpmhm09} use
linear output codes for neural decoding of thoughts from fMRI data,
\citet{Berger1999} used them successfully for text classification, and
\citet{Crammer2000} show that they perform well on MNIST and several
UCI datasets. Many other works use non-linear output codes, and it is
a very interesting research direction to extend our work to such
cases.

The unlabeled sample complexity of our algorithms is exponential in
the dimension because our algorithms require the samples to cover
high-density regions. It is common for semi-supervised algorithms to
require exponentially more unlabeled data than labeled,
e.g. \citep{Singh2008, Castelli1995}. Our results also show that the
unlabeled sample complexity only scales exponentially with the
intrinsic dimension, which may be significantly lower than the ambient
dimension for real-world problems. An interesting direction for future
work is to determine further conditions under which the unlabeled
sample complexity can be drastically reduced.

\subsection*{Acknowledgments}
This work was supported in part by NSF grants CCF-1422910,
CCF-1535967, IIS-1618714, a Sloan Research Fellowship, a Microsoft
Research Faculty Fellowship, and a Google Research Award.

\clearpage
\bibliographystyle{plainnat}
\bibliography{references}

\newpage

\section{Appendix for Error Correcting Output Codes}
\label{apdx:hamdp1}

First, we show that the line segment $[x,y]$ crosses the decision
surface of the linear separator $h_k$ if and only if $h(x)$ and $h(y)$
differ on the $k^{\rm th}$ entry.

\begin{lem}
  \label{lem:ham2intersections}
  Let $i \neq j$ be any pair of classes whose codewords disagree on
  the $k^{\rm th}$ bit. Then for any points $x \in \K_i$ and
  $y \in \K_j$, the line segment $[x,y]$ intersects with the line
  $h_k = 0$.
\end{lem}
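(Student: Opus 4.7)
The plan is to reduce the claim to the intermediate value theorem applied to the affine restriction of $h_k$ to the segment. Parameterize $[x,y]$ as $\gamma(t) = (1-t)x + ty$ for $t \in [0,1]$ and consider $\phi(t) = h_k(\gamma(t))$. Since $h_k$ is linear, $\phi$ is continuous (in fact affine) on $[0,1]$, so the segment intersects the hyperplane $\{h_k = 0\}$ exactly at the zeros of $\phi$. It therefore suffices to show that $\phi(0) = h_k(x)$ and $\phi(1) = h_k(y)$ have opposite signs.

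To get opposite signs, I would argue that $\sign(h_k(x)) = (C_i)_k$ and $\sign(h_k(y)) = (C_j)_k$. Reading $\K_i$ as the set of points where the output code assigns codeword $C_i$, i.e.\ $\K_i = \{x : h(x) = C_i\}$ (which is the natural interpretation in this appendix, since the lemma is meant as a purely geometric fact about $h_1,\dots,h_m$ and the code matrix, independent of the $\beta$ slack in Assumption~\ref{assm:ecoc}), this follows immediately from $h(x) = C_i$ and $h(y) = C_j$ together with the definition $h(z)_k = \sign(h_k(z))$. By hypothesis $(C_i)_k \neq (C_j)_k$, so $h_k(x)$ and $h_k(y)$ indeed have opposite signs. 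The intermediate value theorem then gives some $t^\ast \in (0,1)$ with $\phi(t^\ast) = 0$, and $\gamma(t^\ast) \in [x,y]$ lies on $\{h_k = 0\}$, as required.

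There isn't really a hard step here: the whole argument is a one-line application of IVT once the sign identification is in place. The only thing that needs a moment of care is the sign identification itself, and in particular confirming the convention for $\K_i$ in this appendix. If instead one wanted to state and prove the lemma for points that are \emph{labeled} by $f^\ast$ rather than predicted by $h$ (the $\beta > 0$ regime), then this single-bit claim is not true in general — one needs the codeword separation $2\beta + d + 1$ together with the triangle inequality in Hamming distance to conclude that $h(x)$ and $h(y)$ differ on at least $d+1$ bits, and then apply the IVT argument above to each of those bits. That combined form is precisely what feeds into the proof of Lemma~\ref{lem:separation} sketched in the main text.
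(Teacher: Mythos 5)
Your proof is correct and matches the paper's argument exactly: both reduce to the intermediate value theorem applied to $t \mapsto h_k((1-t)x + ty)$ after reading off the sign of $h_k$ at each endpoint from the codeword bit. Your side remark about the convention for $\K_i$ is also right — the paper's proof implicitly uses $\K_i = \{x : h(x) = C_i\}$ (it is applied inside Lemma~\ref{lem:separation} to sets of the form $\{x : h(x) = c\}$), which is the only reading under which membership in $\K_i$ pins down the sign of $h_k$.
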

\begin{proof}
  Without loss of generality, suppose that $\C_{ik} = 1$ and
  $\C_{jk} = -1$.  Then, from the definition of $\K_i$ and $\K_j$, we
  have that $h_k(x) > 0$ and $h_k(y) < 0$. The function
  $f(t) = h_k((1-t)x+ty)$ is continuous and satisfies
  $f(0) = h_k(x) > 0$ and $f(1) = h_k(y) < 0$. It follows that there
  must be some $t_0 \in (0,1)$ such that $f(t_0) = 0$. But this
  implies that the point $z = (1-t_0)x + t_0y \in [x,y]$ satisfies
  $h_k(z) = 0$ and it follows that $h_k = 0$ intersects with $[x,y]$
  at the point $z$.
\end{proof}

Next, we show that when the consistent linear output code makes at
most $\beta$ errors when predicting the code word of a new example and
the Hamming distance of the code words is at least $2\beta + d + 1$,
then there must be a minimum gap $g > 0$ between any pair of points
belonging to different classes.

\SeparationLemma*
\begin{proof}
  For sets $A$ and $B$, let
  $d(A,B) = \min_{a \in A, b \in B} \norm{a - b}$ denote the distance
  between them and recall that for each $i = 1, \dots, L$, we defined
  $K_i = \setc{x \in \X}{\hamd(h(x), C_i) \leq \beta}$ to be the set
  of points that belong to class $i$.

  Fix any pair of classes $i$ and $j$ and suppose for contradiction
  that $\dist(\K_i, \K_j) = 0$. This implies that there are two code
  words $c, c' \in \{\pm 1\}^m$ such that $\hamd(c, C_i) \leq \beta$,
  $\hamd(c', C_j) \leq \beta$, and the distance between
  $A = \setc{x \in \X}{h(x) = c}$ and $B = \setc{x \in \X}{h(x) = c'}$
  is 0.  First, we construct a point $x$ that belongs to
  $\overline A \cap \overline B$, where $\overline A$ and
  $\overline B$ denote the closure of $A$ and $B$, respectively. Since
  $\dist(A, B) = 0$, there exists a sequence of points
  $x_1, x_2, \ldots \in A$ such that $d(x_n, B) \to 0$ as
  $n \to \infty$. But, since $A$ is bounded, so is the sequence
  $(x_n)$, and therefore by the Bolzano-Weierstrass theorem, $(x_n)$
  has a convergent subsequence. Without loss of generality, suppose
  that $(x_n)$ itself converges to the point $x$. Then $x$ is a limit
  point of $A$ and therefore belongs to the closure of $A$. On the
  other hand, since the function $z \mapsto d(z, B)$ is continuous, it
  follows that $d(x,B) = \lim_{n \to \infty} d(x_n, B) = 0$ and
  therefore $x$ is also in the closure of $B$.

  Now let $k$ be any index such that the code words $c$ and $c'$
  differ on the $k^{\rm th}$ entry. Next, we show that $h_k(x) =
  0$. For each integer $n > 0$, let $C_n = B(x, 1/n)$ be the ball of
  radius $1/n$ centered at $x$. Since $x$ belongs to the closure of
  $A$ and $C_n$ is a neighborhood of $x$, we can find some point, say
  $x_n$ that belongs to the intersection $A \cap C_n$. Similarly, we
  can find a point $y_n$ belonging to $B \cap C_n$. Since the line
  segment $[x_n, y_n]$ passes from $A$ to $B$,
  Lemma~\ref{lem:ham2intersections} guarantees that there is a point
  $z_n \in [x_n, y_n] \subset C_n$ such that $h_k(z_n) = 0$. But, by
  construction, the sequence $z_n$ is converging to $x$ and, since
  linear functions are continuous, it follows that
  $h_k(x) =\lim_{n \to \infty} h_k(z_n) = 0$.

  But this leads to a contradiction: since the codewords $c$ and $c'$
  must disagree on at least $d+1$ entries, at least $d+1$ of the
  linear separators $h_1$, \dots, $h_\numseparators$ intersect at the
  point $x$, which contradicts our assumption that at most $d$ lines
  intersect at any point $x \in \X$.  Therefore, we must have
  $\dist(\K_i, \K_j) > 0$. Since there are finitely many classes,
  taking $g = \min_{i,j} \dist(\K_i, \K_j)$ completes the proof.
\end{proof}

Next, we prove a similar result to Theorem~\ref{thm:hamDPO} that holds
in the agnostic setting of Section~\ref{sec:agnostic}.

\begin{restatable}{thm}{HamDPOAgnosticTheorem}
  \label{thm:hamDPOAgnostic}
  Assume Assumption~\ref{assm:ecoc}, $\error(f^*) \leq \eta$, and $p$
  has $C$-thick level sets. For $0 < \epsilon \leq \eta$, suppose
  $\{p \geq \epsilon/(2\vol(K))\}$ has $N$ connected components, each
  with probability at least $7\eta$. With probability at least
  $1-\delta$, running Algorithm~\ref{alg:singlelinkage} with parameter
  $r_c < g$ on an unlabeled sample of size
  $n = \tilde O(\frac{1}{\epsilon^2}((4C)^{2d}d^{d+1}/r_c^2d + N))$
  and querying $t = O(\ln N/\delta)$ labels per cluster will have
  error at most $\eta + \epsilon$ after querying at most $Nt$ labels.
\end{restatable}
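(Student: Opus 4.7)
The plan is to follow the meta-argument of Section~\ref{sec:agnostic}: the clustering step of Algorithm~\ref{alg:singlelinkage} depends only on the unlabeled sample and is therefore unaffected by label noise, so I would invoke the realizable-case analysis of Theorem~\ref{thm:hamDPO} with the deterministic function $f^*$ in place of the true labels. For the stated sample size this guarantees (with probability at least $1-\delta/3$) that the clusters $\hat A_1,\dots,\hat A_N$ produced at step~1 are in one-to-one correspondence with the connected components $A_1,\dots,A_N$ of $\{p \geq \epsilon/(2\vol(K))\}$: each $\hat A_i$ contains $A_i \cap \sampleset$, is $f^*$-label-homogeneous (thanks to Lemma~\ref{lem:separation} and $r_c < g$), and is the nearest output cluster to every point in $A_i$.

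Next I would control the discrepancy between the observed labels and $f^*$. Two concentration events suffice. First, by Hoeffding's inequality applied to $\ind{y_i \neq f^*(x_i)}$, with probability at least $1-\delta/3$ at most $2\eta n$ of the $n$ training labels disagree with $f^*$. Second, by the same uniform-convergence argument used in Theorem~\ref{thm:hamDPO} over the $2^N$ unions of the $A_i$ sets, with probability at least $1-\delta/3$ each cluster contains at least $(P(A_i) - \epsilon/4)n \geq 6\eta n$ samples (using $P(A_i) \geq 7\eta$ and $\epsilon \leq \eta$). Combining these, within every $\hat A_i$ at most $2\eta n$ samples are mislabeled while at least $4\eta n$ carry the correct $f^*$-label $c_i$; since the mislabeled ones cannot outnumber $c_i$ regardless of how they are distributed across other classes, the cluster's plurality sample label equals $c_i$ with a margin of at least $2\eta n$.

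For the active labeling step, I would draw $t = O(\ln(N/\delta))$ labels uniformly at random from each cluster and take their majority. The constant-factor plurality margin established above plus a Chernoff bound shows that each cluster's recovered label equals $c_i$ with probability at least $1 - \delta/(3N)$, and a union bound over the $N$ clusters lifts this to $\geq 1-\delta/3$ overall. The total number of label queries is at most $Nt$, as claimed.

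With all three high-probability events in hand the proof closes with essentially no new work: the realizable analysis of Theorem~\ref{thm:hamDPO} shows that $\hat f$ agrees with $f^*$ on all but an $\epsilon$-fraction of $P_\X$, and since $\error(f^*) \leq \eta$, a final union bound gives $\prob_{(X,Y) \sim P}(\hat f(X) \neq Y) \leq \eta + \epsilon$. The only genuinely new ingredient, and therefore the main obstacle, is the second step: the $7\eta$ lower bound on $P(A_i)$ is precisely calibrated so that, after subtracting both the $\epsilon/4$ uniform-convergence slack and the $2\eta$ global noise budget, each cluster retains a strict constant-factor plurality of $f^*$-consistent labels, which is what permits the per-cluster query count to scale only as $\ln(N/\delta)$.
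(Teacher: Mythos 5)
Your proposal is correct and follows essentially the same route as the paper's proof: you reuse the unlabeled covering argument from Theorem~\ref{thm:hamDPO} to get label-homogeneous clusters in bijection with the $A_i$'s, bound the number of noisy labels by $2\eta n$ via Hoeffding, use the $7\eta$ mass lower bound plus uniform convergence to ensure each cluster retains a constant-factor majority of $f^*$-consistent labels, and close with a Chernoff/union bound for the $O(\ln(N/\delta))$-query majority vote followed by the error decomposition of Theorem~\ref{thm:hamDPO}. The only cosmetic differences are bookkeeping (your $\epsilon/4$ slack and $\delta/3$ budget versus the paper's $\epsilon$ and cruder accounting, and your explicit $2\eta n$ plurality margin versus the paper's $1/3$-fraction phrasing), all of which are equivalent.
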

\begin{proof}
  Define $\lambda = \epsilon / (2\vol(K))$ and let $A_1$, \dots, $A_N$
  be the connected components of $\{p \geq \lambda\}$. Since
  Assumption~\ref{assm:ecoc} holds, Lemma~\ref{lem:separation}
  guarantees that there is a distance $g > 0$ such that whenever
  $f^*(x) \neq f^*(x')$, we must have $\norm{x - x'} \geq g$. This
  implies that for any $\lambda' > 0$, $f^*$ must be constant on the
  connected components of $\{p \geq \lambda'\}$, since otherwise we
  could construct a pair of points closer than $g$ with
  $f^*(x) \neq f^*(x')$. In particular, we know that $f^*$ is constant
  on each of the $A_i$ sets.

  Since the clustering produced by Algorithm~\ref{alg:singlelinkage}
  does not see the labeled examples, an identical covering argument to
  the one in the proof of Theorem~\ref{thm:hamDPO} shows that for
  $n = O((4C)^{2d} d^{d+1} / (\epsilon^2 r_c^{2d}))$ with probability
  at least $1-\delta$, for each set $A_i$ there is a unique cluster,
  say $\hat A_i$, such that $\hat A_i$ contains $\sampleset \cap A_i$,
  the closest cluster to every point in $A_i$ is $\hat A_i$. Assume
  this high probability event occurs.

  Similarly to the proof of Theorem~\ref{thm:hamDPO}, for
  $n = O(\frac{N}{\epsilon^2} \ln \frac{1}{\delta})$, we have that
  with probability at least $1-\delta$, for any subset of indices
  $I \subset [N]$, we have that
  \[
    \bigl| |\sampleset \cap A_I|/n - P_\X(A_I) \bigr| \leq \epsilon,
  \]
  where $A_I = \bigcup_{i \in I} A_i$. Assume this high probability
  event occurs.

  Now let $y_1$, \dots, $y_n$ be the (unobserved) labels corresponding
  to the unlabeled sample $x_1$, \dots, $x_n$. Since
  $\prob_{(x,y)\sim P}(f^*(x) \neq y) \leq \eta$, if
  $n = O(\frac{1}{\eta^2}\ln \frac{1}{\delta})$, then with probability
  at least $1-\delta$, we have that $f^*(x_i) \neq y_i$ for at most
  $2\eta n$ of the sample points.

  Now, for any connected component $A_i$, let $\hat A_i$ be the
  cluster containing $A_i \cap \sampleset$. Since we have uniform
  convergence for all unions of the $A_i$ sets, and
  $P_\X(A_i) \geq 7\eta$, we know that the set $A_i$ contains at least
  $6\eta n$ sample points. Therefore, even if every point whose label
  $y_i$ disagrees with $f^*$ belongs to $\hat A_i$, we know that at
  most a $2\eta n / (6 \eta n) = 1/3$ fraction of the points belonging
  to the cluster $\hat A_i$ will have labels other than $f^*(A_i)$. If
  we query the label of
  $t = 32 \ln \frac{2N}{\delta} = O(\ln \frac{N}{\delta})$ points
  belonging to cluster $\hat A_i$, then with probability at least
  $1-\delta/N$ the majority label will agree with $f^*$ on
  $A_i$. Applying the union bound over the connected components $A_1$,
  \dots, $A_N$ gives the same guarantee for all connected components
  with probability at least $1-\delta$.

  Let $\hat f$ be the classifier output by
  Algorithm~\ref{alg:singlelinkage} and $Q \subset [N]$ be the indices
  of the $A_i$ sets for which the algorithm queried the label of the
  corresponding cluster $\hat A_i$. The above arguments show that with
  probability at least $1-4\delta$, we have that $\hat f(x) = f^*(x)$
  for any $x \in \cup_{i \in Q} A_i$ and, as in
  Theorem~\ref{thm:hamDPO}, we know that
  $P_\X(\bigcup_{i \not \in Q}A_i) \leq \epsilon/2$. This gives the
  following bound on the error of $\hat f$: Let $(x,y) \sim P$, then
  \begin{align*}
    \prob(\hat f(x) \neq y)
    &= \prob(\hat f(x) \neq y, x \in \{p < \lambda\}) \\
    &\quad + \prob(\hat f(x) \neq y, x \in \bigcup_{i \in Q} A_i) \\
    &\quad + \prob(\hat f(x) \neq y, x \in \bigcup_{i \not \in Q} A_i) \\
    &\leq \prob(x \in \{p < \lambda\}) \\
    &\quad + \prob(f^*(x) \neq y) \\
    &\quad + \prob(x \in \bigcup_{i \not \in Q} A_i).
  \end{align*}
  By our choice of $\lambda$, the first term is at most $\epsilon/2$,
  by assumption the second term is at most $\eta$, and the last term
  is at most $\epsilon/2$, giving the final error bound of
  $\eta + \epsilon$. 
\end{proof}

\section{Appendix For One-vs-all on the Unit Ball}

The following result is similar to Theorem~\ref{thm:onevsall} and
shows that Algorithm~\ref{alg:ova} continues to work in the agnostic
setting of Section~\ref{sec:agnostic}.

\begin{restatable}{thm}{onevsallTheoremAgnostic}
\label{thm:onevsallAgnostic}
Suppose the data is drawn from distribution $P$ over $\X \times [L]$
and that there exists a labeling function $f^*$ such that
$\prob_{(x,y) \sim P}(f^*(x) \neq y) \leq \eta$ and
Assumptions~\ref{assm:ova} and \ref{assm:nearlyuniform} hold when
labels are assigned by $f^*$. Assume that
$\prob_{x \sim P_\X}(f^*(x) = i) \geq 19\eta$ for all classes $i$. For
any excess error $\epsilon$, There exists an $r_c$ satisfying
$r_c = \Omega( \epsilon \clb / (\cub^2\bmin))$ such that with
probability at least $1-\delta$, running Algorithm~\ref{alg:ova} with
parameter $r_c$ on an unlabeled sample of size
$n = \tilde O((\cub^4d/(\epsilon^2 \clb^2 \bmin^2))^d)$ and querying
$t = O(\ln \frac{N}{\delta})$ labels from each cluster will output a
classifier with error at most $\eta + \epsilon$ and query at most $tL$
labels.
\end{restatable}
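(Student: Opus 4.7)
The plan is to follow the meta-argument of Section~\ref{sec:agnostic}: the unlabeled phase of Algorithm~\ref{alg:ova} (projection, activation, and graph construction) depends only on $x$-coordinates, so its analysis is identical to the realizable case, and only the labeling step in Step~5 needs to be modified to query $t$ labels per cluster and take a majority vote. Concretely, I will reuse the choices of $r_a$, $\tau$, and $r_c$ from the proof of Theorem~\ref{thm:onevsall}, together with Lemma~\ref{lem:projectedDensity} and Lemma~\ref{lem:robustlinkage}, to conclude that with probability at least $1 - \delta/4$ there is a bijection between the classes $1, \dots, L$ and $L$ of the clusters $\hat A_1, \dots, \hat A_L$ output by Step~4 such that $\hat A_i$ contains all samples falling in $A_i := \{q^{(i)}_{\rm ub} \geq \epsilon\}$ and is the nearest cluster to every point of $A_i$. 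Since $f^*$ satisfies Assumption~\ref{assm:ova}, each $A_i$ lies entirely inside the class-$i$ region $K_i$, so $f^*$ is constant on $A_i$.

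The next step is to show that each cluster corresponding to a class contains enough sample points that the noise fraction inside it is bounded well below $1/2$. Since $\prob_{x \sim P_\X}(f^*(x) = i) \geq 19\eta$ for every class and $P_\X(K_i \setminus A_i) \leq \epsilon / L$ for suitable constants (by the argument that the level-set approximation $A_i$ captures all but $O(\epsilon)$ mass of $K_i$), we get $P_\X(A_i) \geq 19\eta - \epsilon/L \geq 18\eta$ once $\epsilon$ is small compared to $\eta L$. A standard VC uniform convergence bound on balls (as in the proof of Theorem~\ref{thm:onevsall}) then gives that $|\hat A_i \cap \sampleset| \geq 16 \eta n$ with probability $1 - \delta/4$. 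Meanwhile a Hoeffding bound guarantees that on the full sample at most $2\eta n$ of the unobserved labels $y_i$ disagree with $f^*(x_i)$, again with probability $1 - \delta/4$. Consequently, inside each $\hat A_i$ corresponding to a class, at most a $2\eta n / (16\eta n) = 1/8$ fraction of the true labels disagree with $f^*(A_i)$.

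Given this, querying $t = O(\log(L/\delta))$ labels uniformly at random from each of the $L$ largest clusters and outputting the plurality label makes the majority agree with $f^*(A_i)$ with probability at least $1 - \delta/(4L)$ by a Hoeffding bound, and a union bound across the $L$ largest clusters preserves this with total failure probability $\delta/4$. For the final error calculation, let $\hat f$ be the resulting classifier and let $Q \subseteq [L]$ index the queried clusters that actually correspond to classes. The error decomposes as
\[
  \prob_{(x,y) \sim P}(\hat f(x) \neq y)
  \leq \prob(f^*(x) \neq y) + \prob(\hat f(x) \neq f^*(x)),
\]
and the second term is bounded exactly as in Theorem~\ref{thm:onevsall}: $\hat f$ disagrees with $f^*$ only on the low-density region $\{q \leq \epsilon\}$, which has mass at most $\epsilon/2$, plus the unqueried $A_i$ sets, whose total mass is at most $\epsilon/2$ by the same empirical-versus-true uniform convergence over the $2^L$ unions of the $A_i$. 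This gives total error $\eta + \epsilon$.

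The main obstacle in the argument is ensuring that each cluster is large enough to drown out the $\eta$-fraction of label noise: this is exactly the role of the $19\eta$ lower bound on each class's marginal mass, which must be converted into a lower bound on $P_\X(A_i)$ (not just on $P_\X(K_i)$) after the level-set approximation absorbs an $O(\epsilon/L)$ loss. All the other pieces—the unlabeled-phase guarantee, the uniform convergence for balls and for unions of the $A_i$, the Hoeffding bound for the majority vote, and the bookkeeping for the final error—are direct reuses of machinery already developed in Theorem~\ref{thm:onevsall} and in the analysis of the error-correcting agnostic theorem.
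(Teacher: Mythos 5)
Your proposal follows essentially the same path as the paper's proof: reuse the realizable unlabeled-phase analysis (Lemmas~\ref{lem:projectedDensity} and \ref{lem:robustlinkage}) so that clusters containing each $A_i$ are recovered, use the $19\eta$ class-mass assumption to show each such cluster contains $\Omega(\eta n)$ samples with noisy-label fraction bounded well below $1/2$, then query $O(\log(L/\delta))$ labels per cluster and take a majority vote, giving total error $\eta + \epsilon$ after the same final decomposition. One minor imprecision worth flagging: your claim that $P_\X(K_i \setminus A_i) \leq \epsilon/L$ is not quite the natural bound (since $q < \epsilon$ on $K_i \setminus A_i$ and the sphere has unit measure under $\sam$, the bound is $\leq \epsilon$), but this does not affect the conclusion --- the paper reaches the analogous $P_\X(A_i) = \Omega(\eta)$ lower bound more simply by observing that for small enough $\epsilon$, at least half of class $i$'s mass lies in $A_i$.
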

\begin{proof}
  For small enough $\epsilon$, we know that at least half of the
  probability mass of the points assigned to class $i$ will belong to
  the $\epsilon$-level set of $\{\qub{i} \geq \epsilon\}$ (in the
  notation of Theorem~\ref{thm:onevsall}). Therefore, the probability
  mass of each of the sets $A_1$, \dots, $A_L$ in the proof of
  Theorem~\ref{thm:onevsall} is at least $9\eta$. It follows that if
  we see an unlabeled set of size $n = \tilde O(\frac{1}{\eta^2})$,
  then with probability at least $1 - \delta$ every $A_i$ set will
  contain at least $8\eta n$ points. Since these points belong to
  $A_i$, we know that they will be active, included in the graph $G$,
  and connected to the cluster that contains samples belonging to
  $A_i$. Moreover, under the same high probability event, we know that
  there are at most $2\eta n$ points whose labels disagree with
  $f^*$. Therefore, the cluster that contains samples from $A_i$ must
  have at least $8\eta n$ points, at most $2\eta n$ of which can have
  labels that disagree with $f^*$, so the label assigned by $f^*$ will
  account for at least a $3/4$ fraction of the points belonging to the
  cluster containing $A_i$. It follows that if we query
  $O(\log(L/\delta))$ labels from each $A_i$ set then with probability
  at least $1-\delta$, we will output a classification rule that
  agrees with $f^*$ except with probability $\epsilon$. It follows
  that the error with respect to $P$ at most $\eta + \epsilon$.
\end{proof}

\section{Appendix for Boundary Features Condition}
\label{apdx:boundaryfeatures}

We begin by proving the probability bounds for slices of a
$d$-dimensional ball under the uniform distribution.

\lemBallSlice*
\begin{proof}
Let $B$ be the ball of radius $r$ centered at the origin and $S = \setc{x \in
B}{x_1 \in [0,\rho]}$ be the slice of $B$ for which the first coordinate is in
the interval $[0, \rho]$. The probability that a uniformly random sample from
$B$ lands in the subset $S$ is given by $\vol(S) / \vol(B)$, where $\vol$
denotes the (Lebesgue) volume of a set.

We bound the volume of the set $S$ by writing the volume as a double integral
over the first coordinate $x_1$ and the remaining $d-1$ coordinates $x_R$.

\[
  \vol(S)
  = \int_0^\rho \int_{\reals^{d-1}} \ind{\norm{x_R}_2^2 \leq r^2 - x_1^2} \, d x_R \, d x_1
\]
Noticing that the inner integral is actually the volume of a $d-1$ dimensional
ball of radius $\sqrt{r^2 - x_1^2}$, and using the fact that for any $d$, the
volume of a $d$-dimensional ball of radius $r$ is $r^d v_d$, where $v_d$ is the
volume of the $d$-dimensional unit ball, we have
\[
\vol(S)
= v_{d-1} \int_0^\rho (r^2 - x_1^2)^{(d-1)/2} \, dx_1.
\]
Upper bounding the integrand by $r^{d-1}$ gives that $\vol(S) \leq v_{d-1} \rho
r^{d-1}$. Lower bounding the integrand by $(r^2 - \rho^2)^{(d-1)/2}$ and using
the fact that $\rho \leq \frac{r}{\sqrt 2}$ we have that $\vol(S) \geq v_{d-1}
\frac{1}{\sqrt{2^{d-1}}} \rho r^{d-1}$. Dividing both inequalities by the volume
of $B$, which is $r^d v_d$, and using the fact that for all $d$ we have
$\frac{v_{d-1}}{v_d} \in [\sqrt{\frac{d}{2\pi}}, \sqrt{\frac{d+1}{2\pi}}]$ gives
\[
\sqrt{\frac{d}{2^d\pi}} \frac{\rho}{r}
\leq \prob_{X \sim B}(X \in S)
\leq \sqrt{\frac{d+1}{2\pi}} \frac{\rho}{r},
\]
as required.
\end{proof}

The following is an extension of Theorem~\ref{thm:edgedetection} to
the agnostic setting described in Section~\ref{sec:agnostic}.

\begin{thm}
  \label{thm:edgedetectionAgnostic}
  Suppose the data is drawn from distribution $P$ over $\X \times [L]$
  and that there exists a labeling function $f^*$ such that
  $\prob_{(x,y) \sim P}(f^*(x) \neq y) \leq \eta$ and
  Assumptions~\ref{assm:nearlyuniform} and \ref{assm:boundaryfeatures}
  hold when labels are assigned by $f^*$. Moreover, assume that
  $\prob_{x \sim P_\X}(f^*(x) = i) \geq 10 \eta$ for all classes
  $i$. For any excess error $0 < \epsilon \leq \eta$, with probability
  at least $1-\delta$, running Algorithm~\ref{alg:edgedetection} with
  parameters $r \leq R/2$ and $\tau = \alpha\hbp(r)/2$ for a known
  constant $\alpha$ on on a sample of size
  $n = \tilde O( d m^2 \cub^2 R^d / (\clb^2 \epsilon^4) )$ and
  querying $t = O(\ln(N/\delta))$ labels from the $L$ largest clusters
  will have error at most $\eta + \epsilon$.
\end{thm}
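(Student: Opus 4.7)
The plan is to follow the meta-argument outlined in Section~\ref{sec:agnostic}: the hyperplane-detection phase of Algorithm~\ref{alg:edgedetection} only touches unlabeled data, so it behaves identically to the realizable case, and label noise only affects how we assign labels to the resulting cells. The argument proceeds in four steps.

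First, since the marginal $P_\X$ is assumed to satisfy Assumptions~\ref{assm:nearlyuniform} and \ref{assm:boundaryfeatures} when labels are assigned by $f^*$, I apply Lemma~\ref{lem:hyperplaneapproximations} to conclude that with probability $\geq 1 - \delta/3$ over the unlabeled sample, every half-ball in $H$ is an $\alpha$-approximation to some true $h_i$ and every $h_i$ is $\alpha$-approximated by some half-ball in $H$, for $\alpha$ chosen as in the proof of Theorem~\ref{thm:edgedetection}. Combining this with Lemma~\ref{lem:probtofunc} and Lemma~\ref{lem:ballslice} (exactly as in the realizable proof) shows that the partition produced by $H$ agrees with the partition induced by $h_1, \dots, h_m$ outside a margin of total probability mass at most $\epsilon/2$ under $P_\X$.

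Second, I need to argue that the $L$ largest cells of the partition are in one-to-one correspondence with the $L$ true classes. Each class region $\K_i$ contains at least one cell of the partition, and by the margin bound above, that cell captures all but an $\epsilon/2$ fraction of $\K_i$'s mass. Since $\prob_{x \sim P_\X}(f^*(x) = i) \geq 10\eta \geq 10\epsilon$, each class cell has $P_\X$-mass at least $10\eta - \epsilon/2 \geq 9\eta$, while any ``spurious'' cell (one not aligned with a class) has mass at most $\epsilon/2 \leq \eta/2$. Standard Hoeffding plus a union bound over the at most $N = O(m^d)$ cells then show that with probability $\geq 1-\delta/3$ the $L$ class cells are the $L$ largest cells empirically.

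Third, I handle the label noise. A Chernoff bound gives that with probability $\geq 1-\delta/3$ at most $2\eta n$ of the observed labels $y_i$ disagree with $f^*(x_i)$. For each of the $L$ chosen cells, which contains at least $\approx 9\eta n$ samples by concentration, at most a $2\eta n/(9\eta n) < 1/4$ fraction of the labels in that cell can disagree with $f^*$ on that cell's class. Querying $t = O(\ln(N/\delta))$ random labels per cluster and taking the majority therefore recovers $f^*$'s label for each cell with probability $\geq 1-\delta/(3N)$; a union bound over the $\leq N$ clusters makes this hold simultaneously.

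Finally I combine the pieces. On the intersection of the three high-probability events, the output classifier $\hat f$ agrees with $f^*$ except on the $\epsilon/2$-mass margin and the unqueried cells, whose mass is bounded by $\epsilon/2$ as in Theorem~\ref{thm:edgedetection}. Hence $\prob_{x \sim P_\X}(\hat f(x) \neq f^*(x)) \leq \epsilon$, and since $\prob_{(x,y)\sim P}(f^*(x) \neq y) \leq \eta$, the triangle inequality for $0/1$ loss yields $\prob(\hat f(x) \neq y) \leq \eta + \epsilon$. The main obstacle is really the second step: one has to verify carefully that the lower-bound assumption $\prob(f^*(x)=i) \geq 10\eta$ (together with $\epsilon \leq \eta$) is strong enough to force the $L$ class cells to dominate any spurious fragments produced by the approximate hyperplanes; everything else is a bookkeeping exercise combining the realizable analysis with the standard majority-vote trick.
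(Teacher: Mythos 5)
Your proof is correct and follows essentially the same approach as the paper's: reuse the realizable analysis (Lemmas~\ref{lem:hyperplaneapproximations}, \ref{lem:ballslice}, \ref{lem:probtofunc}) to control the mass of the margins around the true hyperplanes, then apply the majority-vote meta-argument of Section~\ref{sec:agnostic} to handle the $\eta$-fraction of noisy labels. You are in fact somewhat more careful than the paper on your second step --- the paper jumps directly from ``each class has $\geq 8\eta n$ sample points outside the margins'' to ``the majority label within the $L$ largest cells agrees with $f^*$'' without explicitly verifying that the $L$ class cells are the $L$ largest and that spurious cells (contained in the margins, hence mass $<\eta$) cannot displace them, so spelling that out is a worthwhile addition rather than a divergence.
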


\begin{proof}
  In the proof of Theorem~\ref{thm:edgedetection} we argued that with
  the set of hyperplanes produced by Algorithm~\ref{alg:edgedetection}
  will be good approximations to the true hyperplanes. We additionally
  showed that the set of hyperplanes approximating one of the linear
  separators $h_i$ defining the output code will agree with high
  probability with $h_i$ except in a small margin and we bounded the
  probability mass of these margins around each $h_i$ by
  $\epsilon$. It follows that for each class $i$, the probability mass
  of the set of points in that class not contained in these margins is
  at least $10\eta - \epsilon \geq 9\eta$, and it follows that if our
  unlabeled sample is of size at least $\tilde O(\frac{1}{\eta^2})$
  that with probability at least $1-\delta$, we will see at least
  $8\eta n$ points from each class which are not contained in the
  small margins. Under the same high probability event, we know that
  at most $2\eta n$ of the labels we query can disagree with $f^*$,
  which implies that the majority label within the $L$ largest cells
  will be the label predicted by $f^*$ on these cells. It follows that
  if we query the labels of $O(\ln N/\delta)$ labels from each class,
  then with probability at least $1-\delta$ the resulting classifier
  will predict labels that disagree with $f^*$ with probability at
  most $\epsilon$. It follows that the error of the classifier with
  respect to the distribution $P$ is at most $\eta + \epsilon$.
\end{proof}

\end{document}